\documentclass{article}

\usepackage[preprint]{neurips_2026}

\usepackage[utf8]{inputenc}
\usepackage[T1]{fontenc}
\usepackage{hyperref}
\usepackage{url}
\usepackage{booktabs}
\usepackage{amsfonts}
\usepackage{nicefrac}
\usepackage{microtype}
\usepackage{xcolor}

\usepackage{graphicx}
\usepackage{subcaption}
\usepackage{dblfloatfix}

\usepackage{amsmath}
\usepackage{amssymb}
\usepackage{mathtools}
\usepackage{amsthm}

\usepackage{algorithm}
\usepackage{algorithmic}

\usepackage[capitalize,noabbrev]{cleveref}

\theoremstyle{plain}
\newtheorem{theorem}{Theorem}[section]

\newtheorem{corollary}[theorem]{Corollary}

\theoremstyle{definition}

\theoremstyle{remark}

\title{Discrete Diffusion with Sample-Efficient Estimators for Conditionals}

\author{
  Karthik Elamvazhuthi \\
  \And
  Abhijith Jayakumar \\
  \And
  Andrey Y. Lokhov
}

\begin{document}

\maketitle


\begin{abstract}
We study a discrete denoising diffusion framework that integrates a sample-efficient estimator of single-site conditionals with round-robin noising and denoising dynamics for generative modeling over discrete state spaces. Rather than approximating a discrete analog of a score function, our formulation treats single-site conditional probabilities as the fundamental objects that parameterize the reverse diffusion process. We employ a sample-efficient method known as Neural Interaction Screening Estimator (NeurISE) to estimate these conditionals in the diffusion dynamics. Controlled experiments on synthetic Ising models, MNIST, and scientific data sets produced by a D-Wave quantum annealer, synthetic Potts model and one-dimensional quantum systems demonstrate the proposed approach. On the binary data sets, these experiments demonstrate that the proposed approach outperforms popular existing methods including ratio-based approaches, achieving improved performance in total variation, cross-correlations, and kernel density estimation metrics.
\end{abstract}
\section{Introduction}
Generative modeling over discrete spaces is fundamental to a wide range of applications, including molecular design, language modeling, and policy learning in reinforcement learning \cite{ho2016generative, bengio2003neural, jin2018junction}. In these settings, data consist of categorical or binary variables with complex statistical dependencies, and accurately capturing their joint structure requires models that can scale to high-dimensional combinatorial configuration spaces.

While diffusion models have revolutionized generative modeling in continuous domains \cite{ho2020denoising}, their direct application to discrete data has received increased attention. Continuous-time formulations rely on Gaussian noise and score estimation through gradients of log-densities, quantities that are not well defined in discrete spaces. Naive relaxations, such as adding continuous noise to one-hot encodings, break the discrete structure and often yield poor sample quality or unstable training.  

These limitations motivate the need for a principled framework for \emph{discrete diffusion processes} that preserves the combinatorial structure of the data, allows tractable inference, and retains the interpretability and scalability that made diffusion models successful in continuous domains.

A number of works have considered extending diffusion models to discrete spaces. \cite{austin2021structured} considers denoising diffusion for discrete data and discrete time (both absorbing and uniform diffusion), optimizing the variational lower bound (VLB) of the log-likelihood.  
A continuous-time framework for discrete diffusion models is introduced in \cite{campbell2022continuous}, which also optimizes the VLB of the log-likelihood.  The work
\cite{sun2022score} performs score matching for continuous-time diffusion by learning conditionals using cross-entropy.  
In \cite{lou2024discrete}, the authors propose an approach to learn the discrete version of the score using a score-entropy function to ensure non-negativity of the score along the training iterations. For a broader survey on discrete diffusion models see \cite{li2025survey}.


This paper’s contribution begins with making explicit that for forward transitions the canonical time-reversed kernel can be parameterized entirely through ratios of probabilities between configurations that differ at one coordinate, and that these ratios reduce exactly to ratios of single-site conditional distributions given the remaining coordinates. So reverse diffusion can be implemented by learning local conditionals instead of a global density or discrete {\it score.} 
The main methodological contribution following this observation is to plug in
state-of-the-art estimator for learning discrete conditionals. Motivated by theoretical results on the sample-efficiency of interaction-screening estimators, including RISE \cite{vuffray2016interaction} and GRISE \cite{jayakumar2020learning} for learning discrete conditional distributions, we use the Neural Interaction Screening Estimator (NeurISE) \cite{jayakumar2020learning} that has shown strong empirical advantage for learning discrete conditionals with unknown structure. We apply this estimator to the round-robin noising diffusion scheme previously also explored in \cite{varma2024glauber}.


We evaluate the proposed estimation approach for ratios of conditionals using NeuRISE on a range of discrete generative modeling
benchmarks, including synthetic Ising on a $25$-variable system, binarized
MNIST, and \textcolor{black}{quantum annealing (D-Wave) datasets}. Across these settings, we compare
against the classifier based estimator of \cite{varma2024glauber} and three other methods: ELBO-based method D3PM \cite{austin2021structured}, the score based approach of SEDD method \cite{lou2024discrete}, and the flow matching approach of discrete flow matching \cite{gat2024discrete}. We demonstrate consistent improvements in distributional accuracy with comparable run times with existing models.  A limitation of our study is that we didn't train on very large architectures or data sets. Instead, our experiments are intentionally designed around controlled smaller-scale benchmarks. This choice allows us to study the statistical behavior of the proposed estimator and compare denoising algorithms under repeated trials, varying sample sizes, and measurable error decay. In particular, the $25$-variable Ising model serves as a key controlled setting in which exact samples can be produced and total variation distance can be computed exactly, enabling a detailed comparison of how different methods improve with additional training data.

An important observation we make is that, under the round-robin noising scheme, autoregressive generation arises as the hard-noise limit of the reverse-time sampler. In this limit, each reverse step resamples one coordinate from its single-site conditional distribution, and applying these updates in a fixed order recovers an autoregressive sampling procedure. This provides a unified view of the diffusion sampler and the autoregressive sampler associated with \cite{varma2024glauber}, without introducing a separate autoregressive model. This connection is distinct from that of \cite{ou2024your}, who relate absorbing diffusion to any-order autoregressive models at the level of the training objective. In our setting, the connection occurs at the sampler level. That is, under round-robin noising, autoregressive generation emerges directly from the finite-step hard-noise reverse process.

\section{Problem}
Let \( \Sigma \) denote a discrete set of alphabets with cardinality $|\Sigma| = p$. We define the configuration space  $\Sigma^q $ for $ q $ discrete variables or coordinates, with elements denoted as $\sigma := (\sigma_1,...\sigma_q)$. Given training samples from a probability distribution $\mu:  \Sigma^q  \rightarrow \mathbb{R}$, \emph{the goal is to construct a diffusion-based generative model from which new samples can be tractably generated}.

We will specifically focus on a general class of probabilistic models over \( \Sigma^q \) defined using a Hamiltonian \( H : \Sigma^q \to \mathbb{R} \) : $
\mu(\sigma) = \frac{1}{Z} \exp\left(H(\sigma)\right), \quad \sigma \in \Sigma^q,$
where the partition function \( Z = \sum_{\sigma \in \Sigma^q} \exp\left(H(\sigma)\right) \) ensures normalization. This formulation defines an energy-based probability distribution from the exponential family, where the energy function \( H(\sigma) \) typically encodes interactions between components of \( \sigma \), such as pairwise terms or external fields.

An important example of such a model is the \emph{Ising model}, where $\Sigma : = \{-1,1 \}$which is a pairwise Markov random field over \( \Sigma^q \) with interactions defined by an undirected graph \( G = (V, E) \), where \( V = [q] \) and \( E \subseteq V \times V \). The Hamiltonian takes the form:
$
H(\sigma) = \sum_{(i,j) \in E} J_{ij} \sigma_i \sigma_j + \sum_{i \in [q]} h_i \sigma_i. $
Here \( J_{ij} \in \mathbb{R} \) represents the strength of the interaction between nodes \( i \) and \( j \),
 \( h_i \in \mathbb{R} \) represents an external bias or field at node \( i \),
 \( \sigma_i \in \Sigma \) for all \( i \in [q] \).
This model defines a probability distribution:
\begin{equation}
\mu(\sigma) = \frac{1}{Z} \exp\left( \sum_{(i,j) \in E} J_{ij} \sigma_i \sigma_j + \sum_{i \in [q]} h_i \sigma_i \right), \quad \sigma \in \Sigma^q.
\label{eq:ising}
\end{equation}


\section{Discrete Diffusion through Conditionals }

We consider a denoising diffusion framework over the configuration space \( \Sigma^q \), which consists of a known forward Markov process and a learned reverse process. The goal is to construct a forward Markov process \( \{X_n\}_{n=0}^T \) such that $X_0 \sim \mu_0$ is the data distribution that we are interested in learning. The forward process then makes sure that the distribution \( \mu_n \) of \( X_n \) converges to a tractable noise distribution, from which it is easy to sample: $\lim_{n \rightarrow \infty} \mu_n = \mu_{\text{noise}}$.
The Markov chain over \( \Sigma^q \) evolves according to a known transition kernel
$k_n : \Sigma^q \times \Sigma^q \rightarrow \mathbb{R}_{\geq 0}$  which defines the conditionals $k_n(\sigma, \tilde{\sigma}):=\mathbb{P}(X_{n+1} = \sigma | X_n = \tilde{\sigma})$.
The forward evolution of the distribution $\mu_n$ of the process $X_n$ is then given by:
\begin{equation}
\mu_{n+1}(\sigma) = \sum_{\tilde{\sigma} \in \Sigma^q} k_n(\sigma, \tilde{\sigma}) \mu_n(\tilde{\sigma}). 
\end{equation}
\subsection*{Reverse Process}

To sample from the target distribution \( \mu_0 \), we define a reverse process \( \{Y_n\}_{n=0}^T \) such that \( Y_n \sim \mu_{T-n} \). This reverse process is governed by a family of time-inhomogeneous transition kernels $
k_n^{\text{rev}} : \Sigma^q \times \Sigma^q \rightarrow \mathbb{R}_{\geq 0},$
which satisfy the backward recurrence:
\begin{equation}
\label{eq:revp}
\mu_n(\sigma) = \sum_{\tilde{\sigma} \in \Sigma^q} k_n^{\text{rev}}(\sigma, \tilde{\sigma}) \mu_{n+1}(\tilde{\sigma}).
\end{equation}
We obtain a natural candidate for the reverse-time transition kernel via Bayes’ rule is the following:
\begin{equation} 
 k_n^{\text{rev}}(\sigma, \tilde{\sigma}) = k_n(\tilde{\sigma}, \sigma) \cdot \frac{\mu_n(\sigma)}{\sum_{\hat{\sigma} \in \Sigma^q} k_n(\tilde{\sigma}, \hat{\sigma}) \mu_n(\hat{\sigma})}  = \frac{k_n(\tilde{\sigma}, \sigma)}{\sum_{\hat{\sigma} \in \Sigma^q} k_n(\tilde{\sigma}, \hat{\sigma}) \cdot \frac{\mu_n(\hat{\sigma})}{\mu_n(\sigma)}}.
 \label{eq:revkern}
 \end{equation}

This expression shows that the reverse kernel depends only on the forward transition
probabilities and on ratios of the form
\(\mu_n(\hat{\sigma})/\mu_n(\sigma)\). Consequently, if these ratios can be accurately
estimated we can construct accurate approximations to the reverse
kernels, \(k_n^{\mathrm{rev}}\). We can then sample from the noise distribution
\(\mu_{\mathrm{noise}}\) and iteratively apply the reverse transitions to generate new samples
from an approximation of the data distribution. 

This captures the core idea of denoising diffusion probabilistic models (DDPMs). The
forward process gradually drives the data distribution toward a simple noise
distribution, while the reverse process reconstructs samples by inverting this dynamics
using learned conditional structure. Theorem \ref{thm:errorbndmain} in the Appendix formalizes this intuition as discrete-time discrete-state analogue of existing results in literature for continuous-time  continuous space \cite{de2021diffusionSB,chen2022sampling,chen2024convergence}, and continuous-time discrete state \cite{zhang2024convergence,ren2024discrete}, added for the sake of completeness.


\subsection*{Round-Robin Forward Noising}

We now describe a choice of forward dynamics, according to a noising scheme introduced in \cite{varma2024glauber}, that will be used in this paper. In this choice of the forward process, we gradually introduce noise into the configuration by modifying one coordinate (e.g., a pixel or spin) at a time. An advantage of this form of noising is that the number of ratios that are required to be learned for each time step is much smaller than with other schemes \cite{austin2021structured,sun2022score,lou2024discrete}, where all the variables are noised simultaneously. The precise scheme for noising is the following:
\begin{enumerate}
\item  A randomization parameter \( \varepsilon \in [0,1] \) is fixed. 
\item At each time step \( n \in \{1, \dots, T\} \), a specific coordinate is selected in round-robin order: the \( n \)-th coordinate of $\sigma$ is selected as $u =((n-1) \mod q)+1$. 
\item With probability $\varepsilon$ the coordinate $\sigma_u$ is left unchanged. With probability $1-\varepsilon$ the coordinate value $\sigma_u$ is uniformly randomly sampled from $\Sigma$. 
\end{enumerate}
Since there are $p$ number of elements in $\Sigma$, the conditional probabilities if this forward noising process are given by:
\begin{align*}
   k_n(\sigma , \tilde{\sigma} ) = \begin{cases}
\frac{1 - \varepsilon}{p}, & \text{if } \sigma_{-u} = \tilde{\sigma}_{-u},~ \sigma_u \neq \tilde{\sigma}_u, \\
\frac{1 -\varepsilon}{p} + \varepsilon, & \text{if } \sigma = \tilde{\sigma}, \\
0, & \text{otherwise},
\end{cases}
\end{align*}
where $\sigma_{-u} \in \Sigma^{q-1}$ denotes the configuration excluding the
$u$-th coordinate. 
For notational convinience in the forthcoming expressions, we define the parameters,
$
a= \frac{1-\varepsilon}{p},~~ b = \frac{1 -\varepsilon}{p} + \varepsilon,
$
the probability of noising the chosen coordinate to an alphabet different from its current value, and the probability of it picking the current alphabet again, respectively. 

Since the only admissible transitions are one coordinate transition away at the noised coordinate $u=((n-1) \mod q)+1$, we can substitute the expression for $k_n(\sigma, \tilde{\sigma})$ in \eqref{eq:revkern} to express the reverse kernel as,
\begin{align*}
k_n^{\mathrm{rev}}(\sigma,\tilde{\sigma})
&=
\frac{a\,\mu_n(\sigma)}{
a\,\mu_n(\sigma)
+ b\,\mu_n(\tilde{\sigma})
+ a \!\!\!\sum\limits_{\hat{\sigma}\in\mathcal{N}_u(\tilde{\sigma})\setminus\{\sigma,\tilde{\sigma}\}}
\!\!\!\mu_n(\hat{\sigma})
} 
\end{align*}
with $
\mathcal{N}_u(\tilde{\sigma})
:= \{\hat{\sigma} \in \Sigma^q : \hat{\sigma}_{-u} = \tilde{\sigma}_{-u}\}.$
Therefore, this gives rise to three possibilities,
\begin{align}
\label{eq:k_rev1}
k_n^{\mathrm{rev}}(\sigma,\tilde{\sigma})
&=
\frac{
   a\,\mu_n(\sigma)
}{
   b\,\mu_n(\tilde{\sigma})
   + a\sum\limits_{\mathcal{N}_u(\tilde{\sigma})\setminus\{\tilde{\sigma}\}}
       \mu_n(\hat{\sigma})
},
\end{align}
when $\sigma_{-u} = \tilde{\sigma}_{-u}, \ \text{but} \ \ \sigma \neq \tilde{\sigma}$. 
Similarly, 
\begin{align}
\label{eq:k_rev2}
k_n^{\mathrm{rev}}(\sigma,\sigma)
=
\frac{
   b\,\mu_n(\sigma)
}{
   b\,\mu_n(\sigma)
   + a\sum\limits_{\mathcal{N}_u(\sigma)\setminus\{\sigma\}}\mu_n(\hat{\sigma})
},
\end{align}
and $k_n(\sigma, \tilde{\sigma}) = 0$ otherwise.
Of special interest is the case when the discrete set is binary: $\Sigma := \{ -1,1\}$ is binary. Then, we get,

\begin{equation}
k_n(\sigma, \tilde{\sigma}) =
\begin{cases}
\frac{1 - \varepsilon}{2}, & \text{if } \sigma_{-u} = \tilde{\sigma}_{-u},~ \sigma_u \neq \tilde{\sigma}_u, \\
\frac{1 + \varepsilon}{2}, & \text{if } \sigma = \tilde{\sigma}, \\
0, & \text{otherwise},
\end{cases}.
\end{equation}

The corresponding reverse transition probabilities are:
\begin{align*}
k_n^{\text{rev}}(\sigma, \tilde{\sigma}) &= (1 - \varepsilon) \cdot \frac{\mu_n(\sigma)}{(1 - \varepsilon)\mu_n(\sigma) + (1 + \varepsilon)\mu_n(\tilde{\sigma})}, \\
& \quad \text{if } \sigma_{-u} = \tilde{\sigma}_{-u} \ \text{and} \ ~ \sigma_u \neq \tilde{\sigma}_u,
\end{align*}
and
\begin{align*}
k_n^{\text{rev}}(\sigma, \sigma) &= (1 + \varepsilon) \cdot \frac{\mu_n(\sigma)}{(1 + \varepsilon)\mu_n(\sigma) + (1 - \varepsilon)\mu_n(\tilde{\sigma})}. \\
& \quad 
\end{align*}

\textbf{The Hard Noise Autoregressive Limit}

We now consider a special case of the forward process in which noise is {\it harsh}:   $\varepsilon = 0$. This corresponds to
a full randomization of the selected coordinate at each step, and hence all the information is lost in the variable after the corresponding noising step. Our goal in this section is to show that we recover auto-regressive generation in this limit.

Let $T = q$, and define a time-inhomogeneous Markov kernel that updates only the
$n$-th coordinate at time step $n \in \{1, \ldots, T\}$.
The forward transition kernel simplifies to,
\begin{equation}
k_n(\sigma, \tilde{\sigma}) =
\begin{cases}
\frac{1}{p}, & \text{if } \sigma_{-n} = \tilde{\sigma}_{-n}, \\[6pt]
0, & \text{otherwise},
\end{cases}
\end{equation}

Over $T = p$ steps, this procedure fully randomizes each coordinate once,
resulting in convergence to the uniform distribution over $\Sigma^q$.

By the reverse kernel construction, we obtain
\begin{align*}
 \mathbb{P}(X_n = \sigma \mid X_{n+1} = \tilde{\sigma})
= k_n^{\mathrm{rev}}(\sigma, \tilde{\sigma}) 
= \mathbf{1}_{\sigma_{-n} = \tilde{\sigma}_{-n}}
\cdot
\frac{\mu_n(\sigma)}
{\sum_{ \hat\sigma \in \mathcal{N}_n(\sigma)} \mu_n(\hat{\sigma})}.
\end{align*}

This corresponds to sampling the $n$-th coordinate conditioned on the others:
\[
\mathbb{P}(X_n^n = \sigma_n \mid X_n^{-n} = \tilde{\sigma}_{-n})
=
\frac{\mu_n(\sigma)}
{\sum_ {\hat\sigma \in \mathcal N_n(\sigma)} \mu_n(\hat{\sigma})},
\]
where the sum is taken over all configurations
$\hat{\sigma} \in \Sigma^q$ that agree with $\sigma$ outside coordinate $n$.

We now express this reverse process recursively.
Let $X_T = \tilde{\sigma}$ be a configuration sampled from the noise distribution.
At each reverse step $n$, the process samples a configuration $X_n = \sigma$
such that $\sigma_{-n} = \tilde{\sigma}_{-n}$, while drawing $\sigma_n$
from the corresponding conditional distribution.

Formally,
\begin{align*}
 \mathbb{P}(X_n = \sigma \mid X_{n+1} = \tilde{\sigma}) 
=
\mathbf{1}_{\sigma_{-n} = \tilde{\sigma}_{-n}}
\cdot
\mathbb{P}(X_n^n = \sigma_n \mid X_n^{-n} = \tilde{\sigma}_{-n}).
\end{align*}

Unrolling the reverse chain from $T$ to $0$ yields
\begin{align}
\label{eq:rev-unroll}
 \mathbb{P}(X_0 = \sigma \mid X_T = \tilde{\sigma})
 =
\prod_{n=1}^{T}
\mathbb{P}(X_{T-n} = \sigma_{T-n}
\mid
X_{T-n+1} = \tilde{\sigma}_{T-n+1}).
\end{align}

Since the two configurations differ only at coordinate $n$,
\begin{equation}
\label{eq:single-step}
\mathbb{P}(X_{T-n} = \sigma
\mid
X_{T-n+1} = \tilde{\sigma})
=
\mathbf{1}_{\sigma_{-n} = \tilde{\sigma}_{-n}}
\,
\mathbb{P}(\sigma_n \mid \tilde{\sigma}_{-n}).\nonumber
\end{equation}

Substituting into~\eqref{eq:rev-unroll}, we obtain
\begin{equation}
\label{eq:prod-simplified}
\mathbb{P}(X_0 = \sigma \mid X_T = \tilde{\sigma})
=
\prod_{n=1}^{T}
\mathbf{1}_{\sigma_{-n} = \tilde{\sigma}_{-n}}
\,
\mathbb{P}(\sigma_n \mid \tilde{\sigma}_{-n}). \nonumber
\end{equation}

If the reverse update keeps non-updated coordinates fixed, i.e.\
$X_n^{-n} = X_{n+1}^{-n}$, then the conditioning simplifies to
\begin{equation}
\label{eq:autoregressive}
\mathbb{P}(X_0 = \sigma \mid X_T = \tilde{\sigma})
=
\prod_{n=1}^{T}
\mathbb{P}(\sigma_n \mid \tilde{\sigma}_{>n}),
\end{equation}
which recovers an autoregressive factorization over the discrete alphabet
$\Sigma$.

\section{Learning Conditionals using Neural Interaction Screening}
Let $\mu_n$ denote the probability distribution of the random variable $X_n$.
In order to implement the reverse dynamics, we need to estimate the ratio $
\frac{\mu_n(\tilde{\sigma})}{\mu_n(\sigma)}$
from samples of the forward process. Suppose \( \sigma, \tilde{\sigma} \in \Sigma^q \) differ at only one coordinate \( n \), i.e.,
$
\tilde{\sigma}_i = \sigma_i \quad \text{for all } i \neq n, ~\tilde{\sigma}_n \neq \sigma_n.
$

Then, for any distribution \( \mu_n \) over \( \Sigma^q \), we have:
\[
\frac{\mu_n(\tilde{\sigma})}{\mu_n(\sigma)} = \frac{\mu_n(\tilde{\sigma}_n \mid \sigma_{-n})}{\mu_n(\sigma_n \mid \sigma_{-n})},
\]
where \( \sigma_{-n} \in \Sigma^{q-1} \) denotes the shared values of the configuration outside the \( n \)-th coordinate.

\medskip

\noindent
This identity follows directly from the definition of conditional probability
$
\mu_n(\sigma) = \mu_n(\sigma_n \mid \sigma_{-n}) \cdot \mu_n(\sigma_{-n})
$.
Taking the ratio, we obtain:
\[
\frac{\mu_n(\tilde{\sigma})}{\mu_n(\sigma)} 
= \frac{\mu_n(\tilde{\sigma}_n \mid \sigma_{-n}) \cdot \mu_n(\sigma_{-n})}
       {\mu_n(\sigma_n \mid \sigma_{-n}) \cdot \mu_n(\sigma_{-n})}
= \frac{\mu_n(\tilde{\sigma}_n \mid \sigma_{-n})}{\mu_n(\sigma_n \mid \sigma_{-n})}.
\]

\medskip

\noindent
This expression provides a tractable way to compute (or approximate) the required ratio using only local conditionals for the revese dynamics.

To compute the required single-site conditional distributions, we use the
\emph{Neural Interaction Screening Estimator} (NeurISE) \cite{jayakumar2020learning},
which learns local conditionals in discrete graphical models by neural
parameterization of partial energy functions. 

This local conditional modeling is well matched to the reverse diffusion kernel,
which depends only on ratios of single-site conditionals between configurations
differing at one coordinate, enabling efficient and scalable implementation of
the reverse-time dynamics without explicitly modeling the global distribution.

Following NeurISE \cite{jayakumar2020learning}, we introduce the centered indicator embedding
\[
\Phi_s(r) :=
\begin{cases}
1 - \dfrac{1}{q}, & r = s, \\[6pt]
-\dfrac{1}{q}, & r \neq s,
\end{cases}
\qquad s,r \in \Sigma .
\]
We then define the vector-valued embedding, $
\Phi(r) := \bigl(\Phi_1(r), \ldots, \Phi_q(r)\bigr) \in \mathbb{R}^q,
$

Suppose that 
$\mu_n(\sigma) \propto \exp(H(\sigma))$ is a Gibbs distribution for some Hamiltonian function 
$H : \Sigma^{q} \to \mathbb{R}$.
For any coordinate \(u \in [1,,,q]\), there always exists a decomposition $
H(\sigma)
=
H_{- u}(\sigma_{- u})
+
H_u(\sigma),
$
where \(H_{- u}\) does not depend on \(\sigma_u\).w
Substituting it into the Gibbs distribution
$\mu_n(\sigma) \propto \exp(H(\sigma))$ yields
\begin{align*}
 \mu_n(\sigma_u \mid \sigma_{-u}) 
=
\frac{\exp\!\left(H_{- u}(\sigma_{-u}) + H_u(\sigma)\right)}
{\sum_{r \in \Sigma}
\exp\!\left(H_{- u}(\sigma_{-u}) + H_u(\sigma_u = r, \sigma_{-u})\right)}.
\end{align*}
Since $H_{- u}(\sigma_{-u})$ does not depend on $\sigma_u$, we get
\[
\mu_n(\sigma_u \mid \sigma_{-u})
=
\frac{\exp\!\left(H_u(\sigma)\right)}
{\sum_{r \in \Sigma}
\exp\!\left(H_u(\sigma_u = r, \sigma_{-u})\right)}.
\]
Therefore, the partial energy $H_u$ determines the single-site conditional
distribution up to an additive function of $\sigma_{-u}$, and can be written as
$
H_u(\sigma)
=
\log \mu_n(\sigma_u \mid \sigma_{-u})
+
\mathrm{const}(\sigma_{-u}).
$
For each $\sigma_{-u}$, $H_u(\cdot,\sigma_{-u})$ can chosen be to satisfy $
\sum_{r \in \Sigma} H_u(r, \sigma_{-u}) = 0$.
The functions $\Phi_r$ form a basis for functions, $f:\Sigma \rightarrow \mathbb{R}$ that average to $0$ or are {\it centered}.
Therefore, for each coordinate $u \in [q]$, we approximate this partial energy using a
neural network $
\mathrm{NN}_\theta : \Sigma^{q-1} \to \mathbb{R}^q$
Specifically, we use this parameterization at the Hamiltonian level as follows,
\begin{align}
\widetilde H_u(\sigma; w)
&=
\bigl\langle \Phi(\sigma_u), \mathrm{NN}_\theta(\sigma_{-u}) \bigr\rangle 
  = 
\sum_{s=1}^q
\Phi_s(\sigma_u)\,
\mathrm{NN}_\theta(\sigma_{-u})_s. \nonumber
\end{align}
This representation is fully general without loss of expressivity.
Given samples \(\{\sigma^{(n)}\}_{n=1}^N\) from the forward process at time \(n\), the \emph{NeurISE loss} as presented in \cite{jayakumar2020learning}, 
for site \(u\) is
\[
\mathcal{L}_u(\theta)
=
\frac{1}{N}
\sum_{n=1}^N
\exp\!\left(
-
\bigl\langle
\Phi(\sigma_u^{(n)}),
\mathrm{NN}_\theta(\sigma_{-u}^{(n)})
\bigr\rangle
\right).
\]

Here $\theta$ represents the trainable parameters of the neural net

Since we have conditionals that need to be learned be for each time step, we introduce a neural network $NN_{\theta} : \mathbb{R} \times  \mathbb{R}^q \times  \mathbb{R}^{q-1} \rightarrow \mathbb{R}^p$ that accepts arguments $(t,u,\sigma_{-u})$ where the coordinate $u$ is encoded as a one-hot vector. This gives us the composite loss
\begin{align}
\begin{aligned}
\mathcal{L}_u(\theta) = 
\frac{1}{TN}\sum_{s = 1}^T 
\sum_{n=1}^N
\exp\!\left(
-
\bigl\langle
\Phi((X^n_s)_u),
\mathrm{NN}_\theta(t,u,(X^n_s)_{-u}
\bigr\rangle
\right).
\end{aligned}
\label{eq:NISE}
\end{align}
\paragraph{Learned Conditional Distribution.}

Once trained, the approximate conditional distribution is recovered by setting
\begin{equation}
\widehat{\mu}_n(\sigma_u \mid \sigma_{-u})
=
\frac{
\exp\!\left(
\bigl\langle \Phi(\sigma_u), \mathrm{NN}_u(\sigma_{-u}) \bigr\rangle
\right)
}{
\sum_{r \in \Sigma}
\exp\!\left(
\bigl\langle \Phi(r), \mathrm{NN}_u(\sigma_{-u}) \bigr\rangle
\right)
}. 
\label{eq:condexp}
\end{equation}
Therefore, for any pair \((\sigma, \tilde\sigma)\) differing at coordinate \(u\), the ratio required
for the reverse diffusion kernel is given by
\[
\frac{\mu_n(\tilde\sigma)}{\mu_n(\sigma)}
\;\approx\;
\frac{
\exp\!\left(
\bigl\langle \Phi(\tilde\sigma_u), \mathrm{NN}_u(\sigma_{-u}) \bigr\rangle
\right)
}{
\exp\!\left(
\bigl\langle \Phi(\sigma_u), \mathrm{NN}_u(\sigma_{-u}) \bigr\rangle
\right)
}.
\]

\section{Numerical Experiments}

\begin{figure*}[t]
    \centering
    \begin{subfigure}[b]{0.32\textwidth}
        \centering
        \includegraphics[width=1.15\textwidth]{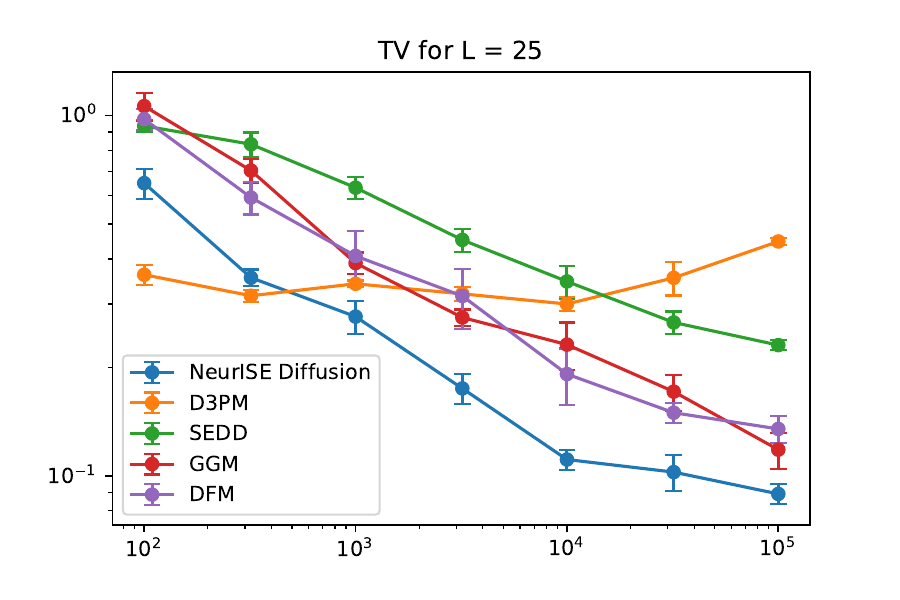}
        \caption{TV}
        \label{fig:subfig1}
    \end{subfigure}
    \hfill
    \begin{subfigure}[b]{0.32\textwidth}
        \centering
        \includegraphics[width=1.15\textwidth]{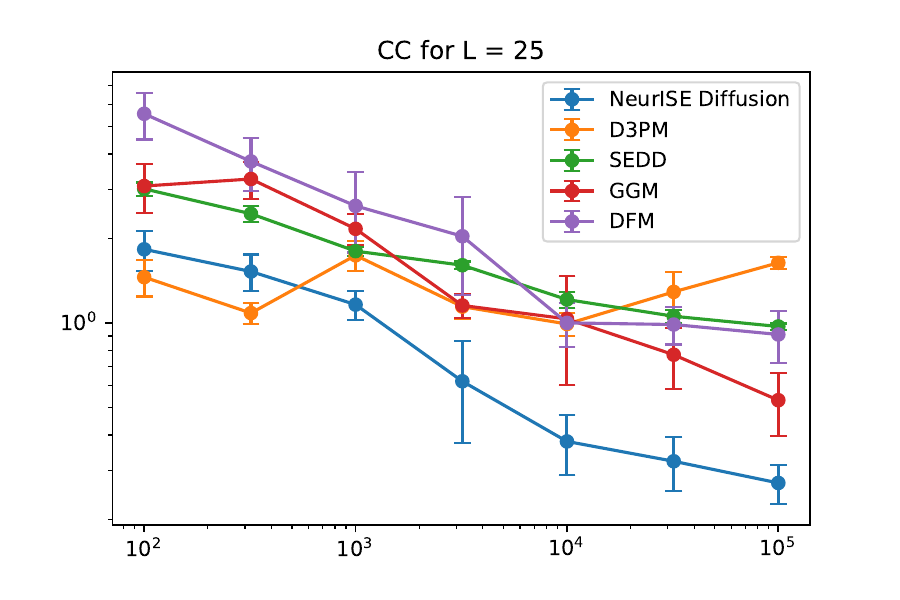}
        \caption{Cross-correlation Error}
        \label{fig:subfig2}
    \end{subfigure}
   \hfill
    \begin{subfigure}[b]{0.32\textwidth}
        \centering
        \includegraphics[width=1.15\textwidth]{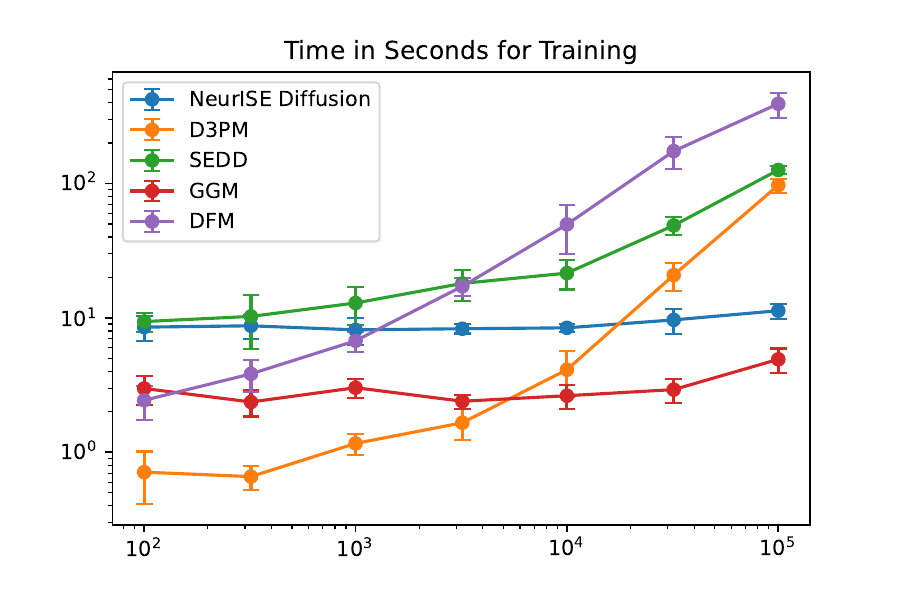}
        \caption{Training Time}
        \label{fig:subfig3}
    \end{subfigure}

    \caption{Trend of TV and Cross-correlation error as a function of training set size for Ising models, averaged over $5$ Ising models and $10$ trials per data set.}
    \label{fig:combined}
\end{figure*}
 
In this section, we compare our denoising method that we refer to as \emph{NeurISE diffusion}, with the algorithm proposed in \cite{varma2024glauber} (GGM), ELBO-based method of D3PM \cite{austin2021structured},  the score matching approach (SEDD) proposed in \cite{lou2024discrete} and discrete flow matching (DFM) \cite{gat2024discrete}. The relevant code used to run the experiments can be found in our \href{https://github.com/lanl-ansi/NeurISEdiffusion}{Github repository}. We implemented our own version of GGM and SEDD, the D3PM implementation was adapted from an unofficial publicly available implementation~\cite{d3pm_pytorch}, and DFM was adapted from the publicly provide code in the guide \cite{lipman2024flowmatchingguidecode}. The core parametric model we will use in our method and each of these methods will be multilayer perceptrons with different depth, depending on the test cases, with batch normalization layers. We make this choice to study the performance of these methods on an equal footing, decoupled from representational differences coming from the model.

\subsection{Test Case 1: Binary Data}
\label{ssec:EAm}

\paragraph{Ising Model} We first compare different methods on small-scale synthetic data. The benchmark is based on the \emph{Edwards-Anderson (EA) model} \cite{edwards1975theory, bhatt1988numerical}, which is a specific instance of the binary Ising model defined by the distribution in Equation~\eqref{eq:ising}. The EA model Hamiltonian is defined over \( p = L^2 \) binary variables \( \sigma_i \in \{-1, +1\} \), arranged on a two-dimensional square lattice of size \( L \times L \). 
The graph \( E \subseteq [q] \times [q] \) corresponds to the set of nearest-neighbor pairs on a 2D periodic grid. Each spin \( \sigma_i \) interacts with its right and bottom neighbors, with periodic boundary conditions applied in both directions. The pairwise couplings \( J_{ij} \) are symmetric random variables sampled independently for each edge \( (i,j) \in E \) as:
  $
  J_{ij} = J_{ji} \in \{-1.2, +1.2\}.
  $
The local fields \( h_i \in \{-0.05, +0.05\} \) are also sampled independently for each node \( i \in [q] \).

In our experiments, we use a lattice size of \( L = 5 \), resulting in a model with \( q = 25 \) binary variables. The results of using a two layer MLP can be seen for different diffusion methods in \figureautorefname ~\ref{fig:combined}. We test the models for different values of training data across averaged over $5$ different choices of EA models, with $10$ runs for differing values of training set size: $[100, 320, 1000...10^5]$. The training samples were generated using an exact sampler. The number of test samples were fixed to be $10^5$ across all three experiments. We find that the NeuRISE-based denoising estimator shows the sharpest decay in total variation (TV) distance with increase in sample size, and performs better than the GGM based estimator of \cite{varma2024glauber}, the SEDD approach proposed in \cite{lou2024discrete} and the discrete flow matching method of \cite{gat2024discrete}. For the model presented in \cite{lou2024discrete}, the configurations had to be one-hot coded to make the algorithm work. The model D3PM \cite{austin2021structured} performs well for low number of samples but its performance deteriorates as the size of the training set is increased. Interestingly, D3PM does not show monotonic decay of TV as the number of training samples decrease. In each case, we also compute the difference between the cross-correlation matrices of the generated samples and the test data, where the correlations are defined as $C_{ij} = \frac{1}{N}\sum_{k=1}^{N} \sigma_i^{(k)} \sigma_j^{(k)}$. The decay of cross-correlation errors show a similar trend as that for the TV. Cross-correlation metric has the advantage of tractability for larger models where TV can't be efficiently computed.

Another important study that was performed is the comparison of the NeuRISE-based diffusion for different choice of noise parameter. From our experiments, it doesn't seem like the denoising scheme with soft noise significantly outperform harsh noise setting, which corresponds to autoregressive generation. In fact, for small training sets, the harsh noise case uniformly performs better than the other schemes: see Appendix \ref{sec:SuppNum} for details.




\begin{figure*}[t!]
    \centering

    \begin{subfigure}[t]{0.2\textwidth}
        \centering
        \includegraphics[width=0.6\linewidth]{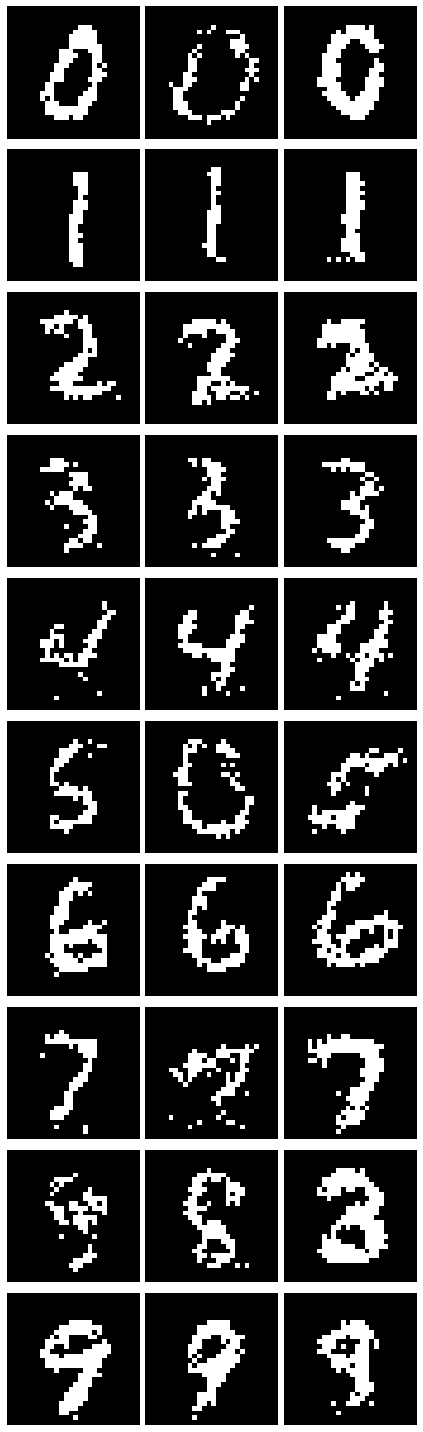}
        \caption{NeurISE Diffusion}
    \end{subfigure}
    \hfill
    \begin{subfigure}[t]{0.12\textwidth}
        \centering
        \includegraphics[width=\linewidth]{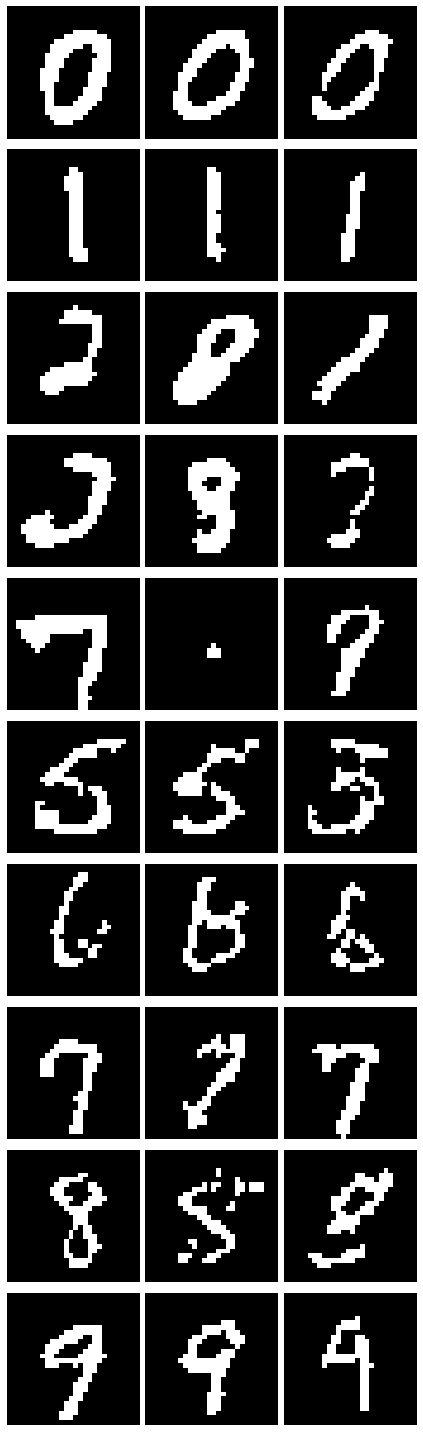}
        \caption{D3PM}
    \end{subfigure}
    \hfill
    \begin{subfigure}[t]{0.12\textwidth}
        \centering
        \includegraphics[width=\linewidth]{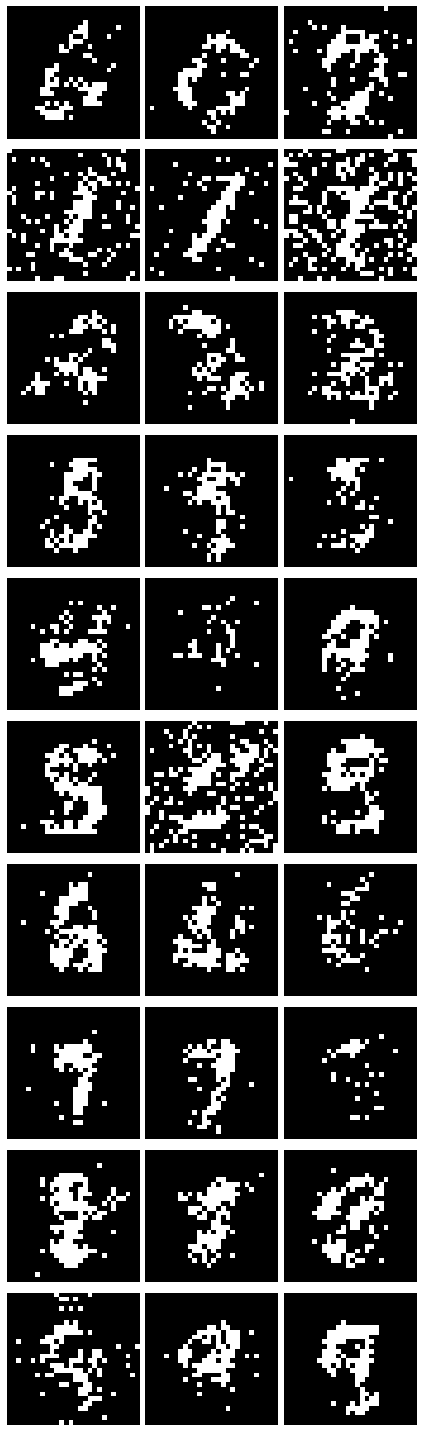}
        \caption{SEDD}
    \end{subfigure}
    \hfill
    \begin{subfigure}[t]{0.12\textwidth}
        \centering
        \includegraphics[width=\linewidth]{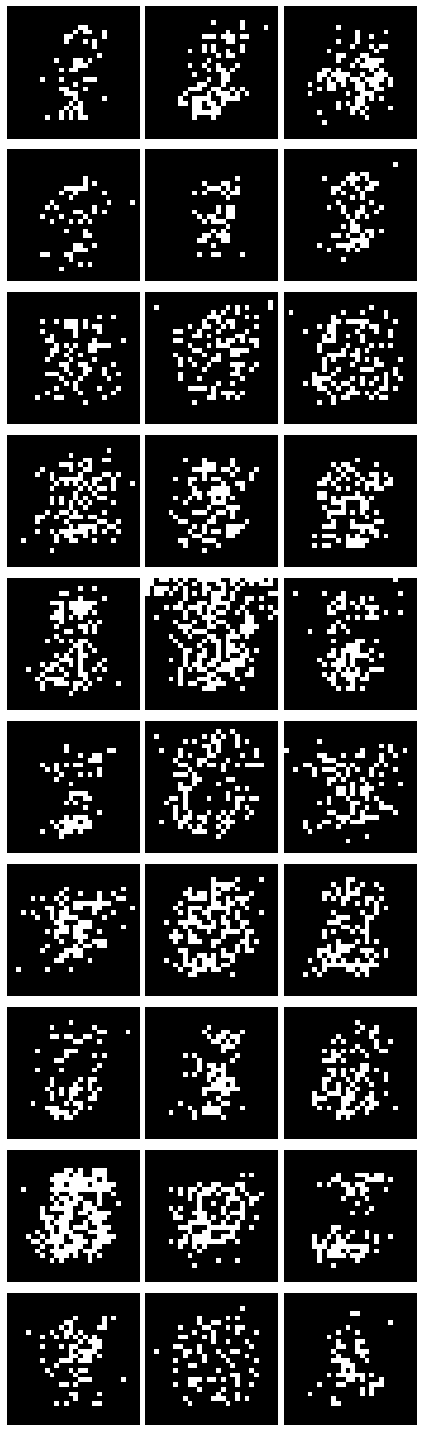}
        \caption{GGM}
    \end{subfigure}
    \hfill
    \begin{subfigure}[t]{0.12\textwidth}
        \centering
        \includegraphics[width=\linewidth]{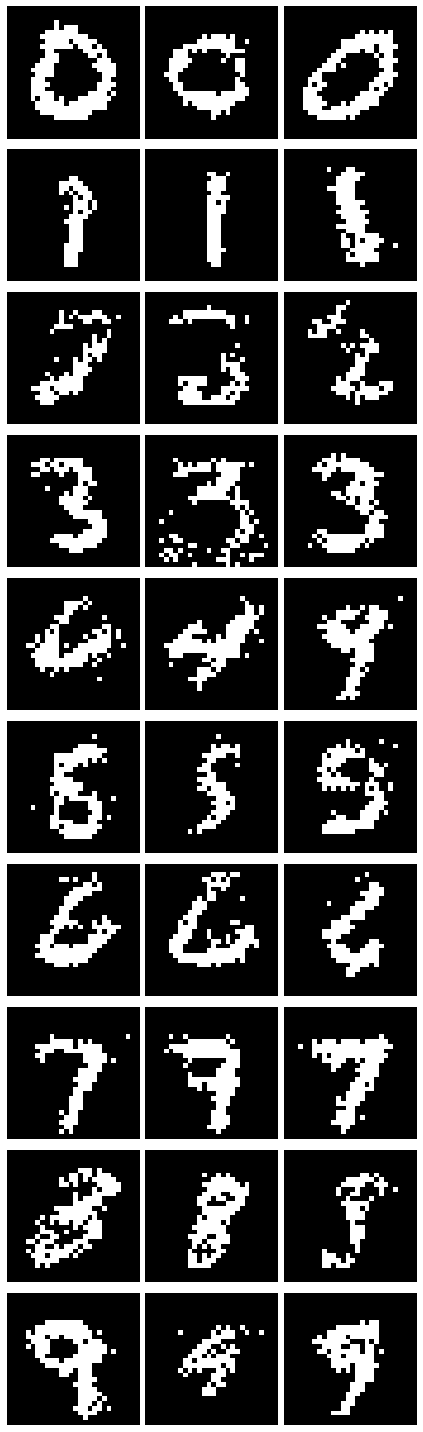}
        \caption{DFM}
    \end{subfigure}

    \caption{Class-conditional MNIST samples. Each subfigure shows generated samples arranged with one row per digit, from 0 to 9.}
    \label{fig:mnist_comparison}
\end{figure*}

\paragraph{MNIST} We evaluate the proposed discrete NeurISE diffusion model on the binarized MNIST dataset, which consists of grayscale images of handwritten digits. Images are discretized into a binarized alphabet by thresholding pixel intensities at a fixed midpoint value, assigning pixels to one of two categories depending on whether their intensity lies above or below the threshold. This results in binary-valued vectors in $\Sigma^q$, where $q = 784$ denotes the number of pixels in each image. For this benchmark, we allowed a hyperoptimization schedule to search over MLPs of upto $5$ layers. We used the MMD metric \cite{gretton2012kernel} and average cross-correlation of the samples, to compare the performance of the models for conditional sampling task. As can be seen in \tableautorefname~\ref{tab:MNIST}, the NeuRISE based learning of conditionals achieve the lowest MMD and cross-correlation error. Samples of generated images can be seen in Figure \ref{fig:mnist_comparison}. Compared to the EA model, we see that D3PM and DFM achieves a much better comparable performance in this setting. Note that the metrics used here do not compute a worst case error between distributions as TV does. This indicates that D3PM and DFM are good at reproducing a lower-order projection of the dataset that aligns with such metrics, but maybe weaker at true distribution learning. 

\paragraph{D-Wave Dataset} To demonstrate our method on a scientific application with real data, we use the diffusion model to learn a binary dataset produced by D-Wave's Advantage quantum annealer \cite{McGeochFarre2020AdvantageOverview}. This dataset is generated by performing repeated quantum annealer runs on the D-Wave machine, with a randomized set of input Hamiltonians. For our experiments we choose  $q=2000$ qubits, which forms a subsection of the annealer and train a diffusion model on the data produced by this portion of the chip. In this example, $8 \times 10^{4}$ samples were used for training, and $2 \times 10^{4}$ samples were reserved for testing. Results in \tableautorefname~\ref{tab:dwave} again show the advantage of NeurISE Diffusion in all metrics.

\paragraph{Comment on performance of various methods} We note that D3PM shows saturation in its error curves, rather than the expected decay with increasing training sample size, for the $5 \times 5$ Ising benchmark. On the other hand, it performs well in single-instance MNIST and the D-Wave data sets. This suggests less stable finite-sample scaling under these metrics, possibly because D3PM optimizes a hybrid objective consisting of a variational-bound term and an auxiliary denoising cross-entropy term. While the cross-entropy component encourages accurate denoising, the combined variational nature of the objective might lead to error saturation. Conversely, GGM shows a more regular sample-size trend on the 5×5 Ising benchmark, but its absolute performance is comparably poorer on the higher-dimensional MNIST and D-Wave datasets. We emphasize that these observed features are not a consequence of an unfortunate hyperparameter choice as hyperparameters for all methods have been optimized using \emph{hyperopt} package, see Appendix \ref{sec:implementation} for details. 

\begin{table*}[t]
    \centering
    \small

    \begin{subtable}[t]{0.48\textwidth}
        \centering
        \begin{tabular}{lcc}
            \toprule
            \textbf{Model} & \textbf{Avg. MMD} & \textbf{Avg. Correlation} \\
            \midrule
            Neurise Diff. & $1.0\mathrm{e}{-2}$ & $1.5\mathrm{e}{-6}$ \\
            D3PM & $1.5\mathrm{e}{-2}$ & $2.8\mathrm{e}{-6}$ \\
            SEDD & $4.1\mathrm{e}{-1}$ & $7.1\mathrm{e}{-6}$ \\
            GGM & $9.1\mathrm{e}{-1}$ & $8.2\mathrm{e}{-5}$ \\
            DFM & $3.0\mathrm{e}{-2}$ & $3.4\mathrm{e}{-6}$ \\
            \bottomrule
        \end{tabular}
        \caption{MNIST dataset comparison.}
        \label{tab:MNIST}
    \end{subtable}
    \hfill
    \begin{subtable}[t]{0.48\textwidth}
        \centering
        \begin{tabular}{lcc}
            \toprule
            \textbf{Model} & \textbf{MMD} & \textbf{Avg. Correlation} \\
            \midrule
            Neurise Diff. & $0.16$ & $1.18\mathrm{e}{-5}$ \\
            D3PM & $0.28$ & $1.81\mathrm{e}{-5}$ \\
            SEDD & $65.03$ & $5.81\mathrm{e}{-5}$ \\
            GGM & $96.4$ & $6.56\mathrm{e}{-5}$ \\
            DFM & $0.22$ & $2.3\mathrm{e}{-5}$ \\
            \bottomrule
        \end{tabular}
        \caption{D-Wave dataset comparison.}
        \label{tab:dwave}
    \end{subtable}

    \caption{Comparison of generative models on MNIST and D-Wave datasets.}
    \label{tab:mnist_dwave_side_by_side}
\end{table*}

\subsection{Test case 2: Multi-Alphabet Potts Models}
To demonstrate the consistency of our method in the multi-alphabet case, we consider the Potts version of the EA model Subsection \ref{ssec:EAm}. Let $\Sigma = \{0,1,\ldots,p-1\}$ and let
$\sigma = (\sigma_1,\ldots,\sigma_q) \in \Sigma^q$ with $p=L^2$.
We consider a $q$-state Potts model on an $L\times L$ periodic lattice with
Hamiltonian, $
H(\sigma)
=
- \sum_{(i,j)\in E} J_{ij}\,\mathbf{1}\{\sigma_i = \sigma_j\}
- \sum_{i=1}^{p} h_{i,\sigma_i},
$,
where $E$ denotes the set of nearest-neighbor pairs on the lattice. Here 
$J_{ij}=J_{ji}\in\{-J,+J\}$ are random couplings,
$h_{i,s}\in\{-h,+h\}$ are state-dependent local fields,
and $\mathbf{1}\{\cdot\}$ is the indicator function. 
We test the model for two lattices, $L = 2$ and $L = 3$, which corresponds to $q = 4$ and $q =9$ states, respectively. As can be seen in Figure \ref{fig:Multiafig}, the TV error decreases as the number of training samples are increased.


    
  



\begin{figure}[t]
    \centering
    \vspace{-0.3cm}

    \begin{subfigure}[t]{0.47\columnwidth}
        \centering
        \includegraphics[width=\linewidth]{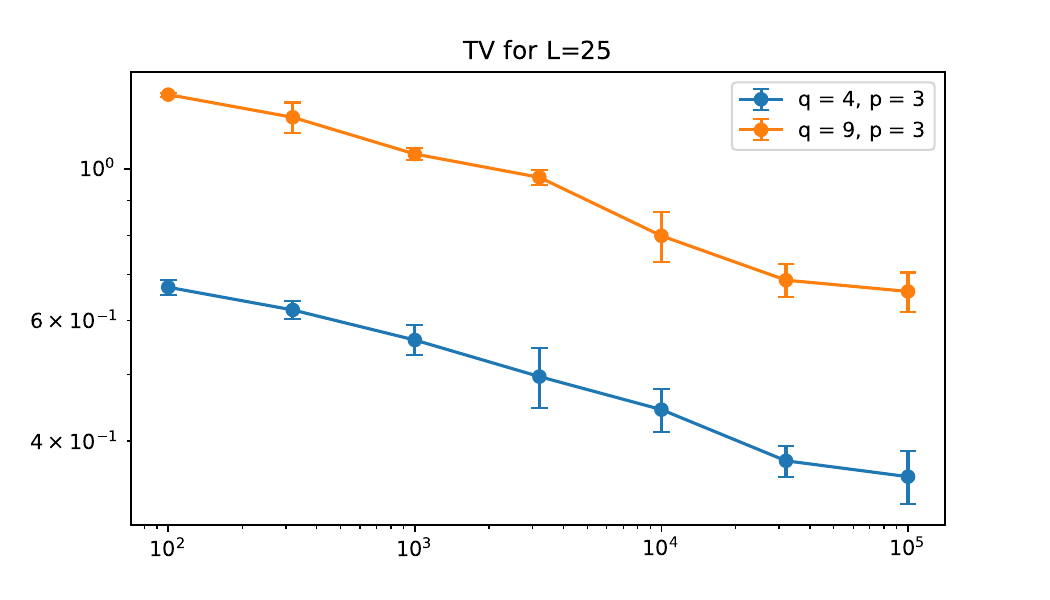}
        \caption{Non-binary Potts model.}
        \label{fig:Multiafig}
    \end{subfigure}
    \hfill
    \begin{subfigure}[t]{0.43\columnwidth}
        \centering
        \includegraphics[width=\linewidth]{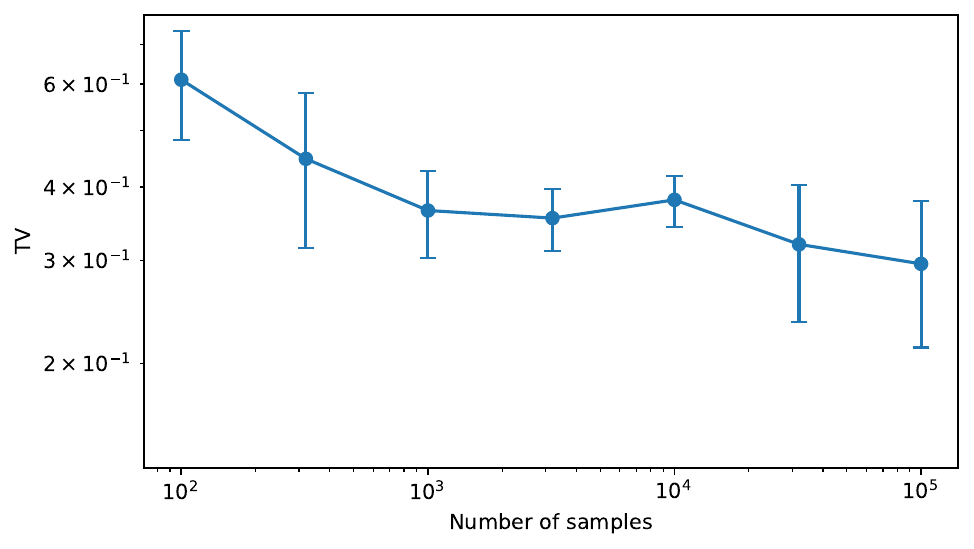}
        \caption{GHZ state.}
        \label{fig:MultiaQuantum}
    \end{subfigure}
    \vspace{-0.1cm}
    \caption{
    Trends for NeuRISE Diffusion as a function of training set size, averaged over $10$ trials for each size.
    The test sample size was $10^5$ for each experiment.
    Error bars represent one standard deviation over trials.
    Left: TV for a non-binary Potts model.
    Right: cross-correlation error of NeuRISE Diffusion trained to learn the GHZ state. 
    }
    \vspace{-0.45cm}
    \label{fig:multi_potts_quantum}
\end{figure}

\subsection{Test case 3: Quantum Tomography of GHZ state}
To test our methods for the multi-alphabet case for a scientifically relevant applications, we use quantum tomography data obtained from the simulation of a four-outcome measurement $(p=4)$ on the Greenberger–Horne–Zeilinger (GHZ) state. This dataset is commonly used in the study of neural net based approaches to the representation of quantum states \cite{torlai2018neural, jayakumar2024learning}. We study the efficacy of NeuRISE Diffusion on this model with $20$ qubits ($q=20$) in \figureautorefname~\ref{fig:MultiaQuantum}. The cross-correlation is generalized to the multi-alphabet case as $C_{ij} = \frac{1}{N}\sum_{k=1}^{N} \sum_{a \in \Sigma} \delta_{\sigma^{(k)}_i = a}\delta_{\sigma^{(k)}_j = a} $. We see that the cross-correlation error goes down significantly after $10^4$ samples, indicating that the model is able to learn a faithful generative model for this quantum state.

\section*{Conclusion}
We introduced a discrete diffusion framework that combines round-robin single-site noising with Neural Interaction Screening (NeurISE) to model high-dimensional categorical data. By learning single-site conditional distributions at intermediate diffusion steps, the proposed approach enables an sample efficient reverse denoising process without requiring full joint likelihood estimation. Empirical results on a variety of synthetic and scientific datasets demonstrate that the method effectively captures complex dependency structures in both image-based and physically motivated discrete systems. Our code is provided in the \href{https://github.com/lanl-ansi/NeurISEdiffusion}{Github repository}.

\subsection*{Acknowledgment}
This work has been supported by the U.S. Department of Energy/Office of Science Advanced Scientific Computing Research Program.



\bibliography{ddb}
\bibliographystyle{plain}

\newpage 
\appendix
\onecolumn

\section{Pseudcode for NeurISE Diffusion}
In this section, we present the Neurise based denoising diffusion algorithm introduced in the paper.
\begin{algorithm}
\caption{Discrete Diffusion with NeurISE} 
\label{alg:neurise_full_random_t}
\begin{algorithmic}[1]
\STATE \textbf{Input:} alphabet $\Sigma$ with $|\Sigma|=p$, dimension $q$, steps $T$ , noise $\varepsilon\in[0,1]$, data distribution $\mu_0$ (samples $, \sigma_0\sim\mu_0$), Number of samples $N$.
\vspace{0.4em}

\textbf{Forward diffusion }
\STATE Sample time index $t \sim \mathrm{Unif}(\{1,2,\ldots,T\})$
\STATE Initialize $\sigma \gets \sigma_0$
\FOR{$n = 1, 2, \ldots, N$}
    \STATE $u \gets ((t-1) \bmod q) + 1$
    \STATE Set $\sigma_{-u} \gets \sigma_{-u}$ 
    \STATE With probability $1-\varepsilon$ change coordinate $u$ according  $\sigma_u\sim \mathrm{Unif}(\Sigma)$
\ENDFOR
\STATE Output forward tuple $(t, \sigma_0, \sigma_t)$ where $\sigma_t \gets \sigma$

\STATE \textbf{Learn conditionals with NeurISE.}
\STATE \textbf{Goal:} estimate single-site conditionals $\hat{\mu}_s(\cdot \mid \sigma_{s,-u})$ for $s=0,\ldots,T-1$
\FOR{$s = 0, 1, \ldots, T-1$}
    \STATE Generate noised samples at time $s$ by running the forward kernel on data to obtain a batch $\{\sigma_s^{(n)}\}_{n=1}^N$
    \FOR{$u = 1, 2, \ldots, q$}
        \STATE Train NeurISE network  $\mathrm{NN}_{\theta}$ by minimizing the NeurISE objective \eqref{eq:NISE}
        \STATE Obtain conditional estimator $\hat{\mu}_s(\cdot\mid\sigma_{s,-u})$  via  \eqref{eq:condexp}
    \ENDFOR
\ENDFOR

\vspace{0.7em}
\STATE \textbf{Reverse sampling (denoising).}
\STATE Initialize $\tilde{\sigma}_T \sim \mathrm{Unif}(\Sigma^q)$
\FOR{$r = T-1, T-2, \ldots, 0$}
    \STATE $u \gets (r \bmod q) + 1$
    \STATE Set $\hat{k}^{rev}_n$ according to \eqref{eq:k_rev1}-\eqref{eq:k_rev2}
    \STATE Sample $\tilde{\sigma}_{T-1} \sim \hat{k}^{rev}_n(\sigma_{T}, \cdot)$
\ENDFOR
\STATE \textbf{return} $\tilde{\sigma}_0$
\end{algorithmic}
\end{algorithm}

\section{Robustness Study for NeurISE Diffusion}
\label{sec:SuppNum}

\subsection{Soft Noise vs Harsh Noise}

In this section, we show our comparison of the harsh noise setting with soft noise ones, to facilitate comparison of autoregressive vs diffusion model. From our observation, the harsh noise setting performs better at low number of training samples and for large number of training samples, the models show very similar performance. See Figure \ref{fig:harsh_vs_soft}. Here, $N$ denotes the total number of time-steps used in the noising and denosing phase, and $e$ denotes the noise parameter $\varepsilon$.

\begin{figure}[!htb]
    \centering
    \begin{subfigure}[b]{0.45\textwidth}
        \centering
        \includegraphics[width=\textwidth]{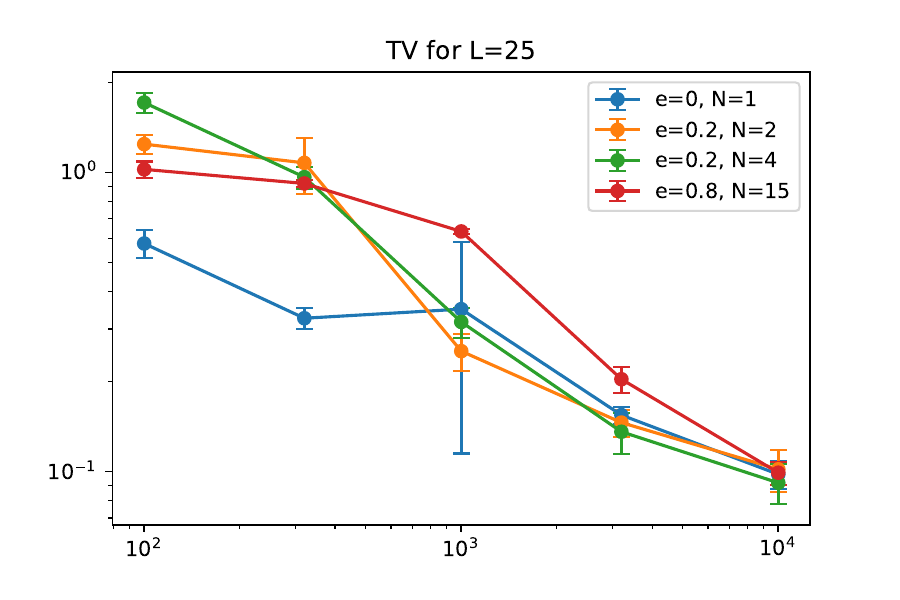}
        \caption{TV}
        \label{fig:subfig22a}
    \end{subfigure}
    \hfill
    \begin{subfigure}[b]{0.45\textwidth}
        \centering
        \includegraphics[width=\textwidth]{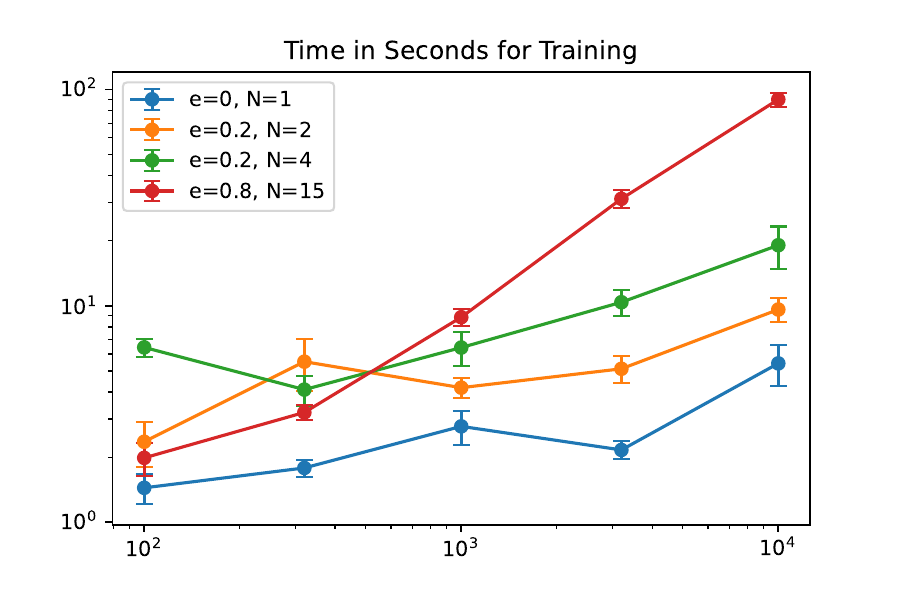}
        \caption{Training Time}
        \label{fig:subfig22b}
    \end{subfigure}
    \caption{Trend of TV and Training time for Ising models, averaged $10$ trials per data set. The test sample size was taken to be $10^4$ for each experiment. The harsh noise version of the problem performs competitively with situations where noise is soft.}
    \label{fig:harsh_vs_soft}
\end{figure}

\subsection{Local vs Global Neural Network}
In this section, we compare the performance of diffusion models trained using two architectures: (i) a collection of local neural networks, with one network per time step, and (ii) a single global neural network shared across the entire time horizon. As shown in Fig.~\ref{fig:global_vs_local}, the local architecture consistently outperforms the global one, despite both approaches having approximately the same total number of trainable parameters. Specifically, in the local setting, each time step is modeled by a single-hidden-layer MLP with 5 hidden units, whereas the global model uses a fixed network of width 125.

We tested NeuRISE Diffusion for two different lattices: $2 \times 2$  and $3 \times 3$, each with alphabet size $3$. As can be seen Figure \ref{fig:Multiafig}, the TV decreases in a statistically expected way as the number of training samples is increased from $10^2$ to $10^5$.

\begin{figure}[!htb]
    \centering
    \begin{subfigure}[b]{0.45\textwidth}
        \centering
        \includegraphics[width=\textwidth]{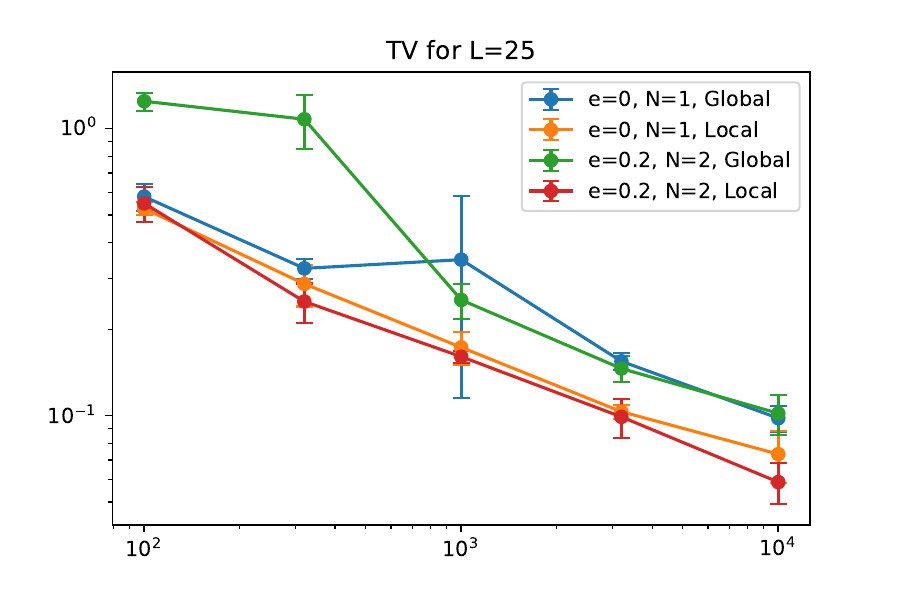}
        \caption{TV}
        \label{fig:subfig32a}
    \end{subfigure}
    \hfill
    \begin{subfigure}[b]{0.45\textwidth}
        \centering
        \includegraphics[width=\textwidth]{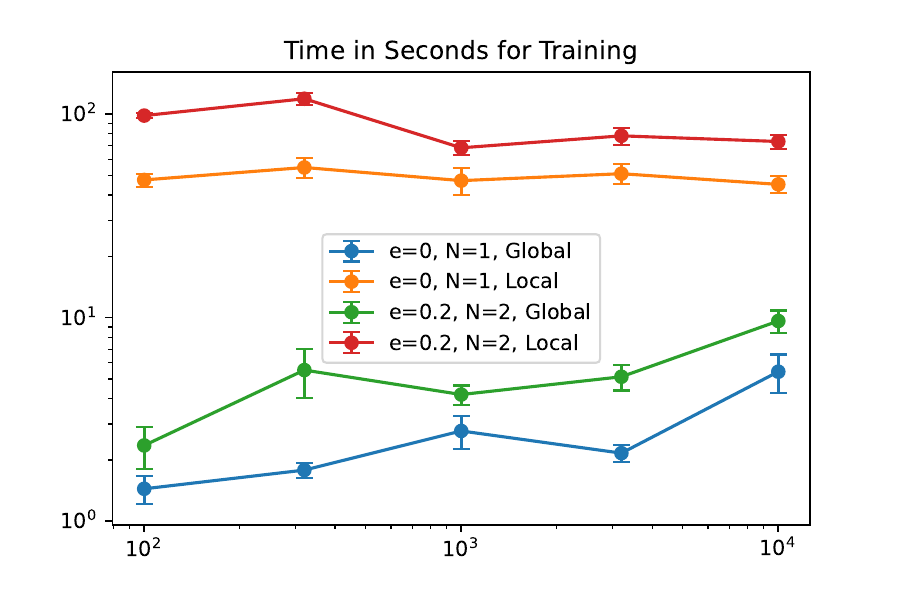}
        \caption{Training Time}
        \label{fig:subfig32b}
    \end{subfigure}
    \caption{Trend of TV and Training time for Ising models, averaged $10$ trials per data set. The test sample size was taken to be $10^4$ for each experiment. Learning individual conditionals using separate neural networks achieves lower TV, than using one global neural network across all conditionals.}
    \label{fig:global_vs_local}
\end{figure}

\section{Theory}
\label{sec:theory}
For the presentation in this section, we introduce some additional notation. 
Given a probability distribution $\mu$ on $\Sigma^q$, the action of the reverse kernel
$k_n^{\mathrm{rev}}$ on $\mu$ is defined by
\[
(\mu k_n^{\mathrm{rev}})(\tilde{\sigma})
=
\sum_{\sigma\in\Sigma^q}
\mu(\sigma)\, k_n^{\mathrm{rev}}(\sigma,\tilde{\sigma}),
\qquad \tilde{\sigma}\in\Sigma^q.
\]

For two kernels $k_n^{\mathrm{rev}}$ and $k_{n-1}^{\mathrm{rev}}$, their
composition is the kernel
\[
(k_n^{\mathrm{rev}} k_{n-1}^{\mathrm{rev}})(\sigma,\sigma')
=
\sum_{\tilde{\sigma}\in\Sigma^q}
k_n^{\mathrm{rev}}(\sigma,\tilde{\sigma})\,
k_{n-1}^{\mathrm{rev}}(\tilde{\sigma},\sigma'),
\]
which corresponds to applying $k_n^{\mathrm{rev}}$ first and
$k_{n-1}^{\mathrm{rev}}$ second.

More generally, the composed reverse kernel
\[
\mathcal{K}_{0:T-1}^{\mathrm{rev}}
=
k_{T-1}^{\mathrm{rev}}\cdots k_0^{\mathrm{rev}}
\]
satisfies
\[
(\mathcal{K}_{0:T-1}^{\mathrm{rev}})(\sigma_T,\sigma_0)
=
\sum_{\sigma_1,\dots,\sigma_{T-1}\in\Sigma^q}
\prod_{t=0}^{T-1}
k_t^{\mathrm{rev}}(\sigma_{t+1},\sigma_t),
\]
where $\sigma_T$ denotes the state at time $T$ and $\sigma_0$ the state at time
$0$.

Finally, the distribution obtained by initializing the reverse chain from
$\mu_{\mathrm{noise}}$ is
\begin{align*}
&\nu_0(\sigma_0)
=
(\mu_{\mathrm{noise}} \mathcal{K}_{0:T-1}^{\mathrm{rev}})(\sigma_0) \\
& =
\sum_{\sigma_T\in\Sigma^q}
\mu_{\mathrm{noise}}(\sigma_T)
(\mathcal{K}_{0:T-1}^{\mathrm{rev}})(\sigma_T,\sigma_0).
\end{align*}





The following result provides analogue of the convergence guarantees established for
score-based diffusion models in the continuous time 
\cite{de2021diffusionSB,chen2022sampling,chen2024convergence,zhang2024convergence,ren2024discrete}. Notably, the analysis is much simpler than in \cite{de2021diffusionSB,chen2022sampling,chen2024convergence,zhang2024convergence,ren2024discrete} due to the discrete time setting. The key insight is that the
discrepancy between the output distribution of the approximate reverse chain and the true
data distribution decomposes cleanly into two contributions: (i) the extent to which the
forward process has mixed toward the noise distribution, and (ii) the cumulative error
incurred when approximating the reverse kernels. This decomposition makes precise the
tradeoff underlying DDPM-style generative modeling. Accurate sampling requires both
sufficiently fast diffusion of the forward process to the noise distribution and sufficiently accurate estimation of the reverse-time
dynamics. Here, the total variation (TV) distance $\|\cdot\|_{\rm TV}$ between two distributions is defined by $
\|\hat{\mu}-\mu\|_{\mathrm{TV}}
=\frac12\sum_{\sigma \in \Sigma^q}
\bigl|\hat{\mu}(\sigma)-\mu(\sigma)\bigr|.$

\begin{theorem}
\label{thm:errorbndmain}
Let $\{X_n\}_{n=0}^T$ be the 
Markov chain on $\Sigma^q$ with forward transition kernels
$k_n:\Sigma^q\times\Sigma^q\to\mathbb{R}_{\ge 0}$. 
Fix a noise reference distribution $\mu_{\mathrm{noise}}$ on $\Sigma^q$ and assume that for some
$\delta_T\in[0,1]$,
\begin{equation}
\lVert \mu_T - \mu_{\mathrm{noise}} \rVert_{\mathrm{TV}} \le \delta_T.
\end{equation}

Let $\{k_n^{\mathrm{rev}}\}_{n=0}^{T-1}$ be a well-defined family of reverse kernels that satisfy \eqref{eq:revp}. Consider approximate reverse kernels
$\{\widehat{k}_n^{\mathrm{rev}}\}_{n=0}^{T-1}$ such that for all $n=0,\dots,T-1$,
\begin{equation}
\sup_{\sigma \in\Sigma^q}
\left\lVert
\widehat{k}_n^{\mathrm{rev}}(\cdot, \sigma) - k_n^{\mathrm{rev}}(\cdot,  \sigma)
\right\rVert_{\mathrm{TV}}
\le \eta.
\end{equation}
Initialize the approximate reverse chain with the noise reference, i.e.\ $Y_T\sim \mu_{\mathrm{noise}}$,
and let $\widehat{\mu}_0$ denote the law of the output $Y_0$ obtained by applying
$\widehat{k}_{T-1}^{\mathrm{rev}},\dots,\widehat{k}_0^{\mathrm{rev}}$.

Then the output distribution satisfies 
\begin{equation}
\|\hat{\mu}_0 - \mu_0\|_{\mathrm{TV}}
\;\le\;
\underbrace{\delta_T}_{\text{Mixing error}}
\;+\;
\underbrace{T\,\eta.}_{\text{Reverse kernel estimation error}}
\end{equation}
\end{theorem}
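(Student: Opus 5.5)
The plan is a standard telescoping (hybrid) argument, using two facts about Markov kernels acting on distributions.

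\emph{Exactness of the true reverse chain.} By \eqref{eq:revp}, $\mu_{n+1}k_n^{\mathrm{rev}}=\mu_n$ for every $n$. Hence $\mu_T k_{T-1}^{\mathrm{rev}}\cdots k_0^{\mathrm{rev}}=\mu_0$, i.e.\ $\mu_T\mathcal{K}_{0:T-1}^{\mathrm{rev}}=\mu_0$.

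\emph{Contraction.} For any Markov kernel $k$ and distributions $\mu,\nu$, $\|\mu k-\nu k\|_{\mathrm{TV}}\le\|\mu-\nu\|_{\mathrm{TV}}$. This follows by writing $\mu k-\nu k=\sum_\sigma(\mu-\nu)(\sigma)k(\sigma,\cdot)$ and using the triangle inequality together with the fact that each row of $k$ sums to one.

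Write $\widehat{\mathcal{K}}=\widehat k_{T-1}^{\mathrm{rev}}\cdots\widehat k_0^{\mathrm{rev}}$, so that $\widehat\mu_0=\mu_{\mathrm{noise}}\widehat{\mathcal{K}}$. Insert $\mu_T\widehat{\mathcal{K}}$ and apply the triangle inequality:
\[
\|\widehat\mu_0-\mu_0\|_{\mathrm{TV}}\le\|\mu_{\mathrm{noise}}\widehat{\mathcal K}-\mu_T\widehat{\mathcal K}\|_{\mathrm{TV}}+\|\mu_T\widehat{\mathcal K}-\mu_T\mathcal K_{0:T-1}^{\mathrm{rev}}\|_{\mathrm{TV}}.
\]
The first term is at most $\delta_T$ by applying contraction repeatedly through the $T$ approximate kernels.

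For the second term I telescope. Define hybrid chains that use the true kernels at the steps $T-1,\dots,n+1$ and the approximate kernels at steps $n,\dots,0$. Consecutive hybrids differ only at step $n$. Starting from $\mu_T$, the true kernels before step $n$ produce exactly $\mu_{n+1}$ by exactness. So the $n$-th difference is
\[
\mu_{n+1}\bigl(\widehat k_n^{\mathrm{rev}}-k_n^{\mathrm{rev}}\bigr)\widehat k_{n-1}^{\mathrm{rev}}\cdots\widehat k_0^{\mathrm{rev}}.
\]
By contraction through the remaining approximate kernels, its TV norm is at most $\|\mu_{n+1}\widehat k_n^{\mathrm{rev}}-\mu_{n+1}k_n^{\mathrm{rev}}\|_{\mathrm{TV}}$. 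This is a mixture over $\sigma\sim\mu_{n+1}$ of the row differences $\widehat k_n^{\mathrm{rev}}(\sigma,\cdot)-k_n^{\mathrm{rev}}(\sigma,\cdot)$, so by convexity of TV it is at most $\eta$. Summing the $T$ terms gives $T\eta$.

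The only step needing care is notational rather than mathematical. The hypothesis writes the kernel difference as $\widehat k_n^{\mathrm{rev}}(\cdot,\sigma)-k_n^{\mathrm{rev}}(\cdot,\sigma)$, and the paper's convention for which argument of $k^{\mathrm{rev}}$ is the current state is inconsistent: \eqref{eq:revp} sums over the second argument, while the appendix's action $\mu k^{\mathrm{rev}}$ sums over the first. I would fix one convention in which the hypothesis bounds each conditional law of the next state given the current state uniformly by $\eta$. With that convention, the mixture bound above is immediate and the rest of the argument is routine.
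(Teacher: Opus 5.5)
Your proof is correct, but you run the hybrid argument in the opposite order from the paper, and that changes what the hypothesis is used for.

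The paper inserts $\nu_0=\mu_{\mathrm{noise}}\mathcal{K}_{0:T-1}^{\mathrm{rev}}$, the exact kernels started from the noise. It gets $\|\nu_0-\mu_0\|_{\mathrm{TV}}\le\delta_T$ by contraction through the \emph{exact} kernels. It then telescopes from $\mu_{\mathrm{noise}}$, applying approximate kernels first and exact kernels afterwards. Its $m$-th increment is therefore controlled by $\|\alpha^{(m)}\widehat{k}_{T-m}^{\mathrm{rev}}-\alpha^{(m)}k_{T-m}^{\mathrm{rev}}\|_{\mathrm{TV}}$, where $\alpha^{(m)}$ is the law of the \emph{approximate} chain. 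Nothing is known about that law, so the supremum over states is genuinely needed.

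You instead insert $\mu_T\widehat{\mathcal{K}}$ and push the initialization error through the \emph{approximate} kernels. You then telescope from $\mu_T$ with exact kernels first, so by exactness the step-$n$ kernel discrepancy is always tested against the true marginal $\mu_{n+1}$. Your argument therefore only uses
\[
\sum_{\sigma\in\Sigma^q}\mu_{n+1}(\sigma)\,\bigl\|\widehat{k}_n^{\mathrm{rev}}(\sigma,\cdot)-k_n^{\mathrm{rev}}(\sigma,\cdot)\bigr\|_{\mathrm{TV}}\le\eta ,\qquad n=0,\dots,T-1 .
\]
This average-case condition is implied by, and in general much weaker than, the stated uniform bound. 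It is also the quantity an estimator trained on forward samples actually controls, since configurations of negligible $\mu_{n+1}$-mass stop mattering.

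The paper's ordering buys something else. Its telescoping never uses the particular initial law, so it gives the start-independent perturbation bound $\|\alpha\widehat{\mathcal{K}}-\alpha\mathcal{K}_{0:T-1}^{\mathrm{rev}}\|_{\mathrm{TV}}\le T\eta$ for every distribution $\alpha$. The price is the sup assumption.

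Your remark on conventions is also accurate. The paper's auxiliary inequality $\|\alpha P-\alpha Q\|_{\mathrm{TV}}\le\sup_{\sigma}\|P(\cdot,\sigma)-Q(\cdot,\sigma)\|_{\mathrm{TV}}$ holds only when the supremum runs over the conditioning state, which is the reading you adopt.
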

\begin{proof}
Let $\mathcal{K}_{0:T-1}^{\mathrm{rev}}$ denote the composition of the exact reverse kernels
$k_{T-1}^{\mathrm{rev}},\dots,k_0^{\mathrm{rev}}$ (applied in this order), and let
$\widehat{\mathcal{K}}_{0:T-1}^{\mathrm{rev}}$ denote the composition of the approximate reverse kernels
$\widehat{k}_{T-1}^{\mathrm{rev}},\dots,\widehat{k}_0^{\mathrm{rev}}$.

Let $\nu_0$ be the law of the output obtained by running the \emph{exact} reverse chain initialized at
time $T$ from $\mu_{\mathrm{noise}}$, i.e.
\[
\nu_0 := \mu_{\mathrm{noise}} \, \mathcal{K}_{0:T-1}^{\mathrm{rev}}.
\]
Since the kernels $\{k_n^{\mathrm{rev}}\}$ satisfy \eqref{eq:revp}, initializing the exact reverse chain
from $\mu_T$ yields $\mu_0$. Hence,
\[
\mu_0 = \mu_T \, \mathcal{K}_{0:T-1}^{\mathrm{rev}}.
\]
By the data processing inequality 
\[
\lVert \nu_0 - \mu_0 \rVert_{\mathrm{TV}}
\le
\lVert \mu_{\mathrm{noise}} - \mu_T \rVert_{\mathrm{TV}}
\le \delta_T.
\tag{IE}
\]

Let $\widehat{\mu}_0$ be the law of the output of the \emph{approximate} reverse chain initialized from
$\mu_{\mathrm{noise}}$, i.e.
\[
\widehat{\mu}_0 := \mu_{\mathrm{noise}} \, \widehat{\mathcal{K}}_{0:T-1}^{\mathrm{rev}}.
\]
We bound $\lVert \widehat{\mu}_0 - \nu_0 \rVert_{\mathrm{TV}}$ by a telescoping argument as is used in pertubation theory of Markov chains \cite{rudolf2024perturbations}.
Define intermediate distributions for $m=0,1,\dots,T$:
\[
\rho^{(m)} :=
\mu_{\mathrm{noise}}\,
\widehat{k}_{T-1}^{\mathrm{rev}}\cdots \widehat{k}_{T-m}^{\mathrm{rev}}\,
k_{T-m-1}^{\mathrm{rev}}\cdots k_0^{\mathrm{rev}},
\]
with the convention that $\rho^{(0)}=\nu_0$ and $\rho^{(T)}=\widehat{\mu}_0$. Then by the triangle inequality,
\[
\lVert \widehat{\mu}_0 - \nu_0 \rVert_{\mathrm{TV}}
=
\lVert \rho^{(T)} - \rho^{(0)} \rVert_{\mathrm{TV}}
\le
\sum_{m=1}^{T}
\lVert \rho^{(m)} - \rho^{(m-1)} \rVert_{\mathrm{TV}}.
\]
Fix $m\in\{1,\dots,T\}$ and set
\[
\alpha^{(m)} :=
\mu_{\mathrm{noise}}\,
\widehat{k}_{T-1}^{\mathrm{rev}}\cdots \widehat{k}_{T-m+1}^{\mathrm{rev}},
\]
so that $\rho^{(m)} = \alpha^{(m)} \widehat{k}_{T-m}^{\mathrm{rev}} k_{T-m-1}^{\mathrm{rev}}\cdots k_0^{\mathrm{rev}}$
and $\rho^{(m-1)} = \alpha^{(m)} k_{T-m}^{\mathrm{rev}} k_{T-m-1}^{\mathrm{rev}}\cdots k_0^{\mathrm{rev}}$.
Using contraction of TV under a common kernel,
\[
\lVert \rho^{(m)} - \rho^{(m-1)} \rVert_{\mathrm{TV}}
\le
\left\lVert
\alpha^{(m)} \widehat{k}_{T-m}^{\mathrm{rev}} - \alpha^{(m)} k_{T-m}^{\mathrm{rev}}
\right\rVert_{\mathrm{TV}}.
\]
For any distribution $\alpha$ and kernels $P,Q$ on $\Sigma^q$,
\[
\lVert \alpha P - \alpha Q \rVert_{\mathrm{TV}}
\le
\sup_{\sigma \in\Sigma^q} \lVert P(\cdot, \sigma) - Q(\cdot, \sigma) \rVert_{\mathrm{TV}}.
\]
Applying this with $\alpha=\alpha^{(m)}$, $P=\widehat{k}_{T-m}^{\mathrm{rev}}$, $Q=k_{T-m}^{\mathrm{rev}}$ and the estimation error, we get,
\[
\lVert \rho^{(m)} - \rho^{(m-1)} \rVert_{\mathrm{TV}} \le \eta.
\tag{KE}
\]
Combining gives
\[
\lVert \widehat{\mu}_0 - \nu_0 \rVert_{\mathrm{TV}} \le T\eta.
\]

Finally, by the triangle inequality,
\begin{align*}
\lVert \widehat{\mu}_0 - \mu_0  \rVert_{\mathrm{TV}}
& \le
\lVert \widehat{\mu}_0 - \nu_0 \rVert_{\mathrm{TV}}
+
\lVert \nu_0 - \mu_0 \rVert_{\mathrm{TV}} \\
& \le
T\eta + \delta_T,
\end{align*}
which concludes the proof.
\end{proof}

In the following Theorem we bound the error of the approximate reverse chain when
initialized from the true noise distribution \(\mu_{\mathrm{noise}}\). In practice,
however, the reverse process is initialized from an empirical approximation $\frac{1}{N}  \sum_{i=1}^N \delta_{X^{\rm data}_{i}}$
of \(\mu_{\mathrm{noise}}\). The following corollary shows that this additional source of
error contributes additively to the final bound and captures the effect of sampling error from the noise distribution. It partially explains why masked diffusion models have been observed to perform better \cite{austin2021structured,santos2023blackout,lou2024discrete} in practice, than when the noise distribution is uniform.

\begin{corollary}(Initialization error)
\label{cor:empinierro}
In the setting of Theorem~\ref{thm:errorbndmain}, let
$\widehat{\mu}_{\mathrm{noise}}$ be any distribution on $\Sigma^q$ such that for
some $\gamma\in[0,1]$,
\begin{equation}
\label{eq:noise_approx}
\bigl\|
\widehat{\mu}_{\mathrm{noise}}-\mu_{\mathrm{noise}}
\bigr\|_{\mathrm{TV}}
\le \gamma .
\end{equation}
Initialize the approximate reverse chain with
$Y_T \sim \widehat{\mu}_{\mathrm{noise}}$, and let $\widetilde{\mu}_0$ denote the
law of the output $Y_0$ obtained by applying
$\widehat{k}_{T-1}^{\mathrm{rev}},\dots,\widehat{k}_0^{\mathrm{rev}}$.

Then the output distribution satisfies
\begin{align}
\bigl\|
\widetilde{\mu}_0 - \mu_0
\bigr\|_{\mathrm{TV}}
\;\le\;
&\underbrace{\delta_T}_{\text{Mixing Error}} + \underbrace{T\eta}_{\text{Reverse kernel estimation error}} \nonumber \\ 
& + \underbrace{\gamma.}_{\text{Noise sampling error}}
\label{eq:empirical_init_bound}
\end{align}
\end{corollary}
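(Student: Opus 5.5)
The plan is to reduce the corollary to Theorem~\ref{thm:errorbndmain} with one extra triangle inequality and one data processing step. Write $\widehat{\mathcal{K}}_{0:T-1}^{\mathrm{rev}} = \widehat{k}_{T-1}^{\mathrm{rev}}\cdots \widehat{k}_0^{\mathrm{rev}}$ for the composed approximate reverse kernel. The two distributions to compare are
\[
\widetilde{\mu}_0 = \widehat{\mu}_{\mathrm{noise}}\,\widehat{\mathcal{K}}_{0:T-1}^{\mathrm{rev}},
\qquad
\widehat{\mu}_0 = \mu_{\mathrm{noise}}\,\widehat{\mathcal{K}}_{0:T-1}^{\mathrm{rev}}.
\]
They are obtained from the same composed kernel applied to two different initial laws. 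The decomposition I will use is
\[
\|\widetilde{\mu}_0-\mu_0\|_{\mathrm{TV}}
\le
\|\widetilde{\mu}_0-\widehat{\mu}_0\|_{\mathrm{TV}}
+\|\widehat{\mu}_0-\mu_0\|_{\mathrm{TV}} .
\]

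The second term is bounded directly by Theorem~\ref{thm:errorbndmain}, which gives $\delta_T+T\eta$. All hypotheses of that theorem are inherited, since we are in its setting.

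For the first term, I will use the same contraction fact used in the proof of the theorem. If $K$ is any Markov kernel on the finite set $\Sigma^q$ (columns summing to one in the paper's convention), then for distributions $\alpha,\beta$
\[
\|\alpha K-\beta K\|_{\mathrm{TV}}\le\|\alpha-\beta\|_{\mathrm{TV}}.
\]
This follows by writing $\tfrac12\sum_{\tilde\sigma}\bigl|\sum_\sigma(\alpha(\sigma)-\beta(\sigma))K(\sigma,\tilde\sigma)\bigr|$, moving the absolute value inside, and summing out $\tilde\sigma$ using nonnegativity and normalization of $K$.

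The composition of Markov kernels is again a Markov kernel, so the fact applies with $K=\widehat{\mathcal{K}}_{0:T-1}^{\mathrm{rev}}$. It gives $\|\widetilde{\mu}_0-\widehat{\mu}_0\|_{\mathrm{TV}}\le\gamma$ by \eqref{eq:noise_approx}. Summing the two terms yields \eqref{eq:empirical_init_bound}.

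The only point needing care is that the approximate kernels $\widehat{k}_n^{\mathrm{rev}}$ are genuine Markov kernels, i.e.\ nonnegative and normalized, since the contraction fails for signed or unnormalized operators. This is implicit in the theorem, because $\widehat{\mu}_0$ is defined there as the law of $Y_0$. I would state this explicitly, and otherwise no real obstacle is expected.
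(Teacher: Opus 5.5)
Your proof is correct and follows the paper's argument. The paper likewise inserts $\widehat{\mu}_0=\mu_{\mathrm{noise}}\widehat{\mathcal K}^{\mathrm{rev}}_{0:T-1}$, bounds $\|\widetilde{\mu}_0-\widehat{\mu}_0\|_{\mathrm{TV}}\le\gamma$ by TV contraction under the common composed kernel, and invokes Theorem~\ref{thm:errorbndmain} for $\|\widehat{\mu}_0-\mu_0\|_{\mathrm{TV}}\le\delta_T+T\eta$. Your remark that the $\widehat{k}_n^{\mathrm{rev}}$ must be nonnegative and normalized is left implicit in the paper, and it is worth stating. When you do, make sure the normalization you sum out is over the output argument of the kernel, because the paper's indexing convention in \eqref{eq:revp} differs from the one in the appendix.
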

\begin{proof}
Let $\widehat{\mathcal K}^{\mathrm{rev}}_{0:T-1}$ denote the composition of the
approximate reverse kernels. Define
\[
\widehat{\mu}_0
:=
\mu_{\mathrm{noise}}\,
\widehat{\mathcal K}^{\mathrm{rev}}_{0:T-1},
\qquad
\widetilde{\mu}_0
:=
\widehat{\mu}_{\mathrm{noise}}\,
\widehat{\mathcal K}^{\mathrm{rev}}_{0:T-1}.
\]
The TV norm under the action of a Markov kernel $Q$ remains preserved (this follows trivially from $\sum_{\sigma \in \Sigma^q}Q(\sigma,\tilde{\sigma})$ = 1) and hence,
\begin{align*}
\bigl\|
\widetilde{\mu}_0-\widehat{\mu}_0
\bigr\|_{\mathrm{TV}}
& =
\bigl\|
(\widehat{\mu}_{\mathrm{noise}}-\mu_{\mathrm{noise}})
\widehat{\mathcal K}^{\mathrm{rev}}_{0:T-1}
\bigr\|_{\mathrm{TV}} \\
& \le
\bigl\|
\widehat{\mu}_{\mathrm{noise}}-\mu_{\mathrm{noise}}
\bigr\|_{\mathrm{TV}}
\le \gamma .
\end{align*}
The claim follows by the triangle inequality together with
Theorem~\ref{thm:errorbndmain}, which gives
$\|\widehat{\mu}_0-\mu_0\|_{\mathrm{TV}}\le \delta_T + T\eta$.
\end{proof}

One can use this corollary to see the effect of error due to sampling from the noise distribution. For instance, let $\widehat{\mu}_{\mathrm{noise}} = \frac{1}{N}\sum_{i=1}^N\delta_{X^{\rm noise}_i}$ be the approximating empirical distribution based on
$N$ i.i.d.\ samples $\{ X^{\rm noise}_1,...X^{\rm noise}_N\}$ from $\mu_{\mathrm{noise}}$.
Then from results of \citep{berend2012convergence} one can quantify the effect of sampling error from the noise distribution, on the distance of the sampled distribution from the data distribution. In the special case, when $\mu^{\rm noise} = \delta_{\sigma_{\rm mask}}$ for some $\sigma_{\rm mask} \in \Sigma^q$, then it is to see that one can in fact, take a stronger bound by setting $\gamma = 0$. While this might explain partially why diffusion models with absorbing states perform better as observed in literature \cite{austin2021structured,santos2023blackout,lou2024discrete}, it can be that the estimation error of the reversal kernel, as captured by $\eta$ is high in such situations, as the distribution becomes much more concentrated. On the other hand, in experiments, we observed uniform distribution performed better. We conjecture this is due to this noising process increasing the temperature of the distribution and due to fact that higher-temperature distributions being easier to learn via NeurISE \cite{jayakumar2020learning}.

\subsection{Non-uniqueness of Reverse Processes}
\label{ssec:nonuni}
In this section, we highlight that, in general, the reverse process associated with a forward Markov chain is not unique. Even when the marginal distributions \( \mu_t \) at each time \( t \in \{0, \dots, T\} \) are fixed, there may exist multiple valid reverse dynamics that recover the same marginals.

Let \( \{X_t\}_{t=0}^T \) be a forward Markov process over \( \Sigma^q \), with \( X_t \sim \mu_t \) for each \( t \). The canonical construction of the reverse process introduced in the main text uses Bayes' rule:
\begin{align*}
\mathbb{P}(X_t = \sigma \mid X_{t+1} = \tilde{\sigma}) 
&= \frac{\mathbb{P}(X_t = \sigma, X_{t+1} = \tilde{\sigma})}{\mathbb{P}(X_{t+1} = \tilde{\sigma})} \\
&= \frac{\mathbb{P}(X_{t+1} = \tilde{\sigma} \mid X_t = \sigma) \cdot \mathbb{P}(X_t = \sigma)}{\mathbb{P}(X_{t+1} = \tilde{\sigma})}.
\end{align*}

This defines a valid reverse kernel based on the forward transition probabilities and the marginal distributions \( \mu_t \). However, other reverse processes may exist that yield the same marginals.

Let \( \{Y_t\}_{t=0}^T \) be another sequence of random variables over \( \Sigma^q \) such that:
\begin{align}
\mathbb{P}(Y_t = \sigma) = \mathbb{P}(X_t = \sigma) = \mu_t(\sigma) \quad \text{for all } t \in \{0, \dots, T\}. 
\end{align}
Then \( Y_t \) is a valid alternative reverse process if it satisfies the marginal constraints above.

\medskip

\noindent
As an illustrative example, consider the extreme case where the reverse kernel is marginally independent of the conditioning variable:
\begin{align}
\mathbb{P}(Y_t = \sigma \mid Y_{t+1} = \tilde{\sigma}) = \mu_t(\sigma). 
\end{align}

In other words, the reverse step simply resamples from the marginal \( \mu_t \), ignoring the previous state \( Y_{t+1} \).

Now fix \( Y_T = X_T \sim \mu_T \), and define \( Y_t \) recursively using (2). Then we show by induction that:
\begin{align*}
\mathbb{P}(Y_t = \sigma) &= \sum_{\hat{\sigma} \in \Sigma^q} \mathbb{P}(Y_t = \sigma \mid Y_{t+1} = \hat{\sigma}) \cdot \mathbb{P}(Y_{t+1} = \hat{\sigma}) \\
&= \sum_{\hat{\sigma} \in \Sigma^q} \mu_t(\sigma) \cdot \mu_{t+1}(\hat{\sigma}) \\
&= \mu_t(\sigma) \cdot \sum_{\hat{\sigma} \in \Sigma^q} \mu_{t+1}(\hat{\sigma}) \\
&= \mu_t(\sigma),
\end{align*}
since \( \mu_{t+1} \) is a probability distribution and thus sums to 1.

In this degenerate case, the reverse kernel \( k^{\text{rev}}_t : \Sigma^q \times \Sigma^q \to \mathbb{R}_{\geq 0} \) is defined by:
\begin{equation}
k^{\text{rev}}_t(\sigma, \tilde{\sigma}) := \mathbb{P}(Y_t = \sigma \mid Y_{t+1} = \tilde{\sigma}) = \mu_t(\sigma) \quad \forall \tilde{\sigma} \in \Sigma^q.
\end{equation}

That is, \( k^{\text{rev}}_t(\cdot, \tilde{\sigma}) \) is simply the marginal distribution \( \mu_t \), regardless of the value of \( \tilde{\sigma} \). This reverse kernel completely ignores the conditioning state and independently resamples \( \sigma \sim \mu_t \) at each step.

While this kernel does not capture the time-reversal of the actual forward dynamics, it still guarantees the correct marginal distributions at every time step:
\begin{align*}
\mu_t(\sigma) & = \sum_{\tilde{\sigma} \in \Sigma^q} k^{\text{rev}}_t(\sigma, \tilde{\sigma}) \cdot \mu_{t+1}(\tilde{\sigma})  \\
& = \mu_t(\sigma) \cdot \sum_{\tilde{\sigma} \in \Sigma^q} \mu_{t+1}(\tilde{\sigma}) = \mu_t(\sigma).
\end{align*}

This construction shows that the reverse process is not uniquely determined by the marginal sequence \( \{\mu_t\}_{t=0}^T \), and highlights a family of reverse dynamics that can be arbitrarily different from the canonical reverse Markov process.

In fact, the set of all admissible reverse kernels that satisfy the marginal condition is convex; any convex combination of two valid reverse kernels \( k^{\text{rev}}_{t,1} \) and \( k^{\text{rev}}_{t,2} \) also yields a valid reverse kernel,
\[
k^{\text{rev}}_t = \lambda k^{\text{rev}}_{t,1} + (1 - \lambda) k^{\text{rev}}_{t,2}, \quad \text{for any } \lambda \in [0, 1].
\]
This further underscores the flexibility  and ambiguity inherent in defining reverse-time dynamics.

It is important to emphasize that the transitions defined by general reverse kernels are \emph{not local}. Unlike the canonical reverse process where transitions are typically constrained to move between configurations that differ by a single spin (i.e., Hamming distance one) this degenerate reverse kernel allows transitions between any two configurations in \( \Sigma^q \), regardless of their Hamming distance:
\[
k^{\text{rev}}_t(\sigma, \tilde{\sigma}) = \mu_t(\sigma) \quad \text{for all } \sigma, \tilde{\sigma} \in \Sigma^q.
\]

In other words, starting from any configuration \( \tilde{\sigma} \), the reverse process can jump to any other configuration \( \sigma \in \Sigma^q \) in a single step, with probability determined solely by the marginal \( \mu_t(\sigma) \). There is no notion of continuity or neighborhood preserved by the dynamics. This contrasts sharply with reverse processes, where transitions are typically limited to configurations that differ by only one coordinate.

Thus, while the degenerate reverse process is mathematically valid and correctly reproduces the marginal distributions \( \mu_t \), it does not preserve the locality structure of the forward process. Its ability to transition freely between any two configurations in \( \Sigma^q \) , without regard for neighborhood structure, leads to a reverse kernel that is inherently non-local. In high-dimensional spaces, such non-local kernels operate over the entire \( \Sigma^q \times \Sigma^q \) transition space, making them exponentially more complex to represent, learn, or approximate. On the other hand, kernels for local update rules scale linearly and generalize more easily.

\section{Numerical Implementation Details}
\label{sec:implementation}

\subsection{Model Architecture}

We use the following architecture for each of the cases :

\begin{itemize}
\item Input block:
$\mathrm{Linear}(d_{\text{in}} \to h)\rightarrow
\mathrm{LayerNorm}(h)\rightarrow
\mathrm{SiLU}$,  
where $d_{\text{in}}$ is dependent on the denoising algorithm.
\item Hidden blocks (up to $D{-}1$ blocks, depending on depth $D\in\{1,\dots,5\}$):  
$\mathrm{Linear}(h \to h)\rightarrow
\mathrm{LayerNorm}(h)\rightarrow
\mathrm{SiLU}$.

\item Output layer:
$\mathrm{Linear}(h \to 2)$.
\end{itemize}

\subsection{Shared hyperparameter sweep}
For every dataset--denoising algorithm combination, we run a small hyperparameter optimization loop over the following parameter using the {\it hyperopt package} in Python:
\begin{itemize}
\item \textbf{Depth:} $D\in\{1,..,5\}$ 
\item \textbf{Width:} $h \in\{64,128,256,512\}$ 
\item \textbf{Noise parameter} (Only for NeurISE Diffusion): $\varepsilon \in (0,1)$ 
\item \textbf{Noising time horizon (Only for NeurISE Diffusion): $T \in [0,2,...10]$ }
\item \textbf{Learning rate: $ {\rm lr} \in (10^{-4}, 5 \times 10^{-2})$} with log uniform distribution.
\item \textbf{Weight decay: $ w \in (10^{-8}, 10^{-3}$} with log uniform distribution.
\item \textbf{Batch Size: $[64,128,256,512]$} 
\end{itemize}
\subsection{SEDD Implementation details}

In the implementation of SEDD \cite{lou2024discrete}, we introduced a final layer that enforced positivity in the output of the layer for the score approximation. While it is claimed that the loss function introduced in \cite{lou2024discrete} naturally forces the output of the network towards non-negativity, we did not observe this in our implementation, and in fact found that the training algorithm returned NaNs if the final layer was not appropriately augmented. Additionally, the inputs were required to be one-hot coded entirely in order for algorithm to show any significant learning. In contrast, for the Neurise Diffusion, GGM, D3PM and DFM only conditioning parameters were required to be one-hot coded. 

\newpage

\end{document}